\def\eqref#1{equation~\ref{#1}}
\def\1{\bm{1}}
\def\vtheta{{\bm{\theta}}}
\def\vg{{\bm{g}}}
\def\vz{{\bm{z}}}
\def\mI{{\bm{I}}}
\DeclareMathAlphabet{\mathsfit}{\encodingdefault}{\sfdefault}{m}{sl}
\SetMathAlphabet{\mathsfit}{bold}{\encodingdefault}{\sfdefault}{bx}{n}
\def\gM{{\mathcal{M}}}
\def\sR{{\mathbb{R}}}
\newcommand{\R}{\mathbb{R}}
\newcommand*{\X}{\mathcal{X}}
\newcommand*{\Y}{\mathcal{Y}}
\newcommand*{\M}{\mathcal{M}}
\newcommand{\Prob}[1]{ \mathbb{P} \left( #1 \right) }
\newcommand\widebar[1]{\mathop{\overline{#1}}}
\numberwithin{equation}{section}
\pgfplotsset{width=0.6\textwidth,compat=1.13}
\theoremstyle{definition}
\newtheorem{Definition}{Definition}[section]
\theoremstyle{remark}
\newtheorem{Remark}{Remark}[section]
\theoremstyle{plain}
\definecolor{myfavblue}{rgb}{0.05, 0.2, 0.8}
\definecolor{keywords}{RGB}{255,0,90}
\definecolor{comments}{RGB}{0,0,113}
\definecolor{red}{RGB}{160,0,0}
\definecolor{green}{RGB}{0,150,0}
\definecolor{C0}{rgb}{0.12156862745098039, 0.4666666666666667, 0.7058823529411765}  %
\definecolor{myblue}{HTML}{3182bd}
\definecolor{myred}{HTML}{de2d26}
\definecolor{mydarkblue}{rgb}{0,0.08,0.45}
\title{Exploring the Limits of Differentially Private Deep Learning with Group-wise Clipping}
\author{
Jiyan He$^1$, Xuechen Li$^2$, Da Yu$^3$, Huishuai Zhang$^4$, Janardhan Kulkarni$^5$, \\
\textbf{ Yin Tat Lee$^{5}$, Arturs Backurs$^5$, Nenghai Yu$^1$, Jiang Bian$^4$} \\
$^1$University of Science and Technology of China \\ $^2$Stanford University \\
$^3$Sun Yat-sen University \\
$^4$Microsoft Research Asia \\
$^5$Microsoft Research \\
\texttt{\{hejiyan@mail,ynh@\}.ustc.edu.cn}, \\
\texttt{lxuechen@cs.stanford.edu},\\
\texttt{yuda3@mail2.sysu.edu.cn},\\
\texttt{\{huzhang,jiang.bian\}@microsoft.com},\\
\texttt{\{jakul,yintatlee,arturs.backurs\}@microsoft.com}.
}
\begin{document}

\maketitle

\begin{abstract}

Differentially private deep learning has recently witnessed advances in computational efficiency and privacy-utility trade-off.
We explore whether further improvements along the two axes are possible and provide affirmative answers leveraging two instantiations of \emph{group-wise clipping}. 
To reduce the compute time overhead of private learning, we show that \emph{per-layer clipping}, where the gradient of each neural network layer is clipped separately, allows clipping to be performed in conjunction with backpropagation in differentially private optimization. This results in private learning that is as memory-efficient and almost as fast per training update as non-private learning for many workflows of interest. 
While per-layer clipping with constant thresholds tends to underperform standard flat clipping, per-layer clipping with adaptive thresholds matches or outperforms flat clipping under given training epoch constraints, hence attaining similar or better task performance within less wall time.
To explore the limits of scaling (pretrained) models in differentially private deep learning, we privately fine-tune the 175 billion-parameter GPT-3. 
We bypass scaling challenges associated with clipping gradients that are distributed across multiple devices with \emph{per-device clipping} that clips the gradient of each model piece separately on its host device.
Privately fine-tuning GPT-3 with per-device clipping achieves a task performance at $\epsilon=1$ better than what is attainable by non-privately fine-tuning the largest GPT-2 on a summarization task.

\end{abstract}

\section{Introduction}
Recent works on deep learning with differential privacy (DP) have substantially improved the computational efficiency~\citep{subramani2021enabling,anil2021large} and privacy-utility trade-off~\citep{li2022does,yu2022differentially,de2022unlocking,mehta2022large}, resulting in cost-effective private learning workflows with favourable utility under common levels of privacy guarantee. 
Common to most of these works is the use of differentially private stochastic gradient descent (DP-SGD) which clips per-example gradients (herein referred to as \emph{flat clipping}) and noises their average before performing the parameter update based on a minibatch~\citep{song2013stochastic,bassily2014private,abadi2016deep}.
We explore whether further improvements in computational efficiency and privacy-utility trade-off are possible and provide affirmative answers for both directions, leveraging two instantiations of \emph{group-wise clipping} for DP-SGD.

DP-SGD is known to be computationally costly due to clipping per-example gradients.
Instantiating per-example gradients and (potentially) normalizing them can incur both high memory and time costs in standard machine learning frameworks~\citep{paszke2019pytorch,frostig2018compiling}, and thus private machine learning with DP-SGD is reportedly much more memory demanding and/or slower than its non-private counterpart~\citep{carlini2019secret,hoory2021learning}. 
Recent works have considerably improved the memory and time efficiency of DP-SGD with better software primitives~\citep{subramani2021enabling} and algorithms~\citep{yousefpour2021opacus,lee2021scaling,li2022large,bu2022scalable}.
Nevertheless, private learning still shows non-trivial increases in either memory usage or compute time when compared to non-private learning head-to-head. 
For instance, better software primitives do not eliminate the inherent increase in memory spending~\citep{subramani2021enabling}, and improved algorithms only remove this memory overhead at the cost of extra runtime~\citep{li2022large}. 
The first research question we study is therefore
\renewenvironment{quote}
  {\list{}{\rightmargin=0.1in \leftmargin=0.1in}%
  \item\relax}
  {\endlist}
\begin{quote}
	\centering
	\emph{Can private learning be as memory and time efficient (per epoch) as non-private learning?}
\end{quote}

We answer the above question affirmatively by giving an efficient implementation of \emph{per-layer clipping} which had been studied in past works but not from a computational efficiency perspective~\citep{mcmahan2018learning,dupuy2022efficient}. 
Clipping per-example gradients of separate neural networks layers (e.g., linear, convolution) separately allows clipping to be performed in conjunction with backpropagation. 
This results in private learning that is as memory-efficient and almost as time-efficient per training update as non-private learning for many small to moderate scale workflows of practical interest. 
While per-layer clipping with static clipping thresholds chosen by hand tends to underperform flat clipping, we show that per-layer clipping with adaptively estimated thresholds matches or outperforms flat clipping under given training epoch constraints, hence attaining similar or better task performances with less wall time.

DP-SGD is known to (possibly) incur substantial performance losses compared to non-private learning.
To improve the privacy-utility trade-off, several past works have leveraged large-scale publicly pretrained models~\citep{yu2022differentially,li2022large,de2022unlocking,mehta2022large}.
These works observe that the privacy-utility trade-off improves with the use of larger (and thus better) pretrained models.\footnote{While size does not equate quality, there is strong correlation between the two under currently popular pretraining techniques~\citep{liu2019roberta,brown2020language}. We're optimistic that future smaller models pretrained with improved techniques can be as performant as current large models~\citep{hoffmann2022training}. 
}
We extend this research and study a second research question 
\begin{quote}
	\centering
	\emph{Can the privacy-utility trade-off be further improved with even better / larger pretrained models?}
\end{quote}
To study this, we scale DP fine-tuning to work with one of the largest and most performant pretrained language models to date---the original 175 billion-parameter GPT-3.
Weights of this model cannot be hosted in the memory of a single accelerator (e.g., GPU) and must be distributed across multiple devices. 
This presents challenges for flat clipping which calls for communicating per-example gradient norms across devices. To bypass these challenges, we turn to \emph{per-device clipping}, where each device is prescribed a clipping threshold for clipping per-example gradients of the hosted model piece.
Per-device clipping incurs no additional communication cost and allowed us to obtain with GPT-3 a private fine-tuning performance at $\epsilon=1$ that is better than what is attainable by non-privately fine-tuning the largest GPT-2 on a challenging summarization task. 

Our contributions are summarized below. 
\begin{itemize}[leftmargin=6mm]
\item[(1)] We show per-layer clipping enables clipping to be done in conjunction with backpropagation in DP optimization and results in private learning that is as memory-efficient and almost as fast per training update as non-private learning for many small to moderate scale workflows of interest.
\item[(2)] We show adaptive per-layer clipping matches or outperforms flat clipping under fixed training epoch constraints, and thus attains similar or better task performances with less wall time.
\item[(3)] We bypass scaling challenges associated with communicating per-example gradient norms with per-device clipping, with which we scale DP fine-tuning to work with the 175 billion-parameter GPT-3 and obtain improved task performance for a challenging summarization task at $\epsilon=1$.
\end{itemize}

\section{Preliminaries} \label{sec:preliminary}
This section aims to cover background on gradient clipping in DP optimization and explain why alternate group-wise clipping strategies can be attractive from a computational efficiency standpoint.

Our work trains machine learning models (more precisely deep neural networks) with optimizers that guarantee DP. 
For completeness, we recap the definition of approximate DP/$(\epsilon, \delta)$-DP below.
\begin{Definition}[$(\epsilon,\delta)$-DP]
A randomized algorithm $\M: \X \to \Y$ is ($\epsilon, \delta$)-DP if for all neighboring datasets $\mathbb{D}, \mathbb{D}'\in \X$ and all $Y \subset \Y$, it holds that 
$
\Prob{\M(\mathbb{D}) \in Y} \le 
\exp( \epsilon ) \cdot \Prob{ \M(\mathbb{D}') \in Y } + \delta.
\label{eq:epsilon_delta_dp}
$
\end{Definition}
In our case, $\gM$ is an optimization algorithm which outputs the learned model parameters $\vtheta$. 
Widely used DP optimizers (e.g., DP-SGD) usually introduce two additional steps before each parameter update to privatize gradients:
(1) clip per-example gradients of a minibatch by their Euclidean norms according to some threshold $C$;
(2) add Gaussian noise to the sum of clipped gradients. 
In practice, the clipping threshold $C$ can be either a tunable hyperparameter or set to some (privatized) statistic estimated from data. 
The standard deviation of the Gaussian noise is determined by the clipping threshold $C$ and the noise multiplier $\sigma$, the latter of which is set by a privacy accounting procedure given target privacy parameters $(\epsilon, \delta)$, the number of iterations $T$, and the subsampling rate $\rho$~\citep{abadi2016deep,mironov2017renyi, dong2019gaussian, gopi2021numerical}. 
We now present background on different per-example gradient clipping strategies in DP optimization.

\textbf{Flat Clipping.}
This is the clipping scheme used in the original DP-SGD algorithm~\citep{abadi2016deep}.
Here, the gradient of example $s_i$'s loss $\ell(\vtheta, s_i)$ with respect to model parameters $\vg^{(i)}:= {\partial \ell(\vtheta, s_i)} / {\partial \vtheta}$ is normalized if its magnitude exceeds the threshold $C$.
Thus, the actual contribution (up to scaling) of the $i$th instance to the noisy gradient is
$\widetilde{\vg}^{(i)} := \vg^{(i)}\cdot \min \{1, C /  {\|\vg^{(i)}\| } \}$.
Flat clipping cannot be performed until the gradient norms $\{\|\vg^{(i)}\|\}_i$ are computed. 
Since the latter quantities are only known after backpropagation completes, flat clipping necessitates a second-round of computation after backpropagation to conditionally rescale the gradients. 
This is a source of overhead and presents complications when model weights don't fit on a single device.

\textbf{Group-Wise Clipping.} 
This scheme partitions the set of parameters $\vtheta \in \R^d$ into $K$ disjoint groups $\{\vtheta_k  \}_{k=1}^K$ with  $\vtheta_k \in \sR^{d_k}$ for $k\in [K]$.
For each group $k$, the scheme prescribes a clipping threshold $C_k$. 
Denote example $s_i$'s gradient for the $k$th group by $\vg_k^{(i)}:= {\partial \ell(\vtheta, s_i)} / {\partial \vtheta_k}$.
Under group-wise clipping, the $k$th clipped gradient for $s_i$ is 
$\tilde{\vg}_k^{(i)} := \vg_k^{(i)}\cdot \min\{1, C_k / {\|\vg_k^{(i)}\|}\}$.
Next, we present two instantiations of group-wise clipping that are computationally advantageous in different settings.

\section{Efficient Private Learning with Adaptive Per-layer Clipping} \label{sec:benefit-loss}
The first instantiation of group-wise clipping we study is per-layer clipping which clips gradients of separate neural network layers separately.
This scheme had been presented in past works~\citep{mcmahan2018general,mcmahan2018learning,dupuy2022efficient},
but neither its computational properties nor its performance implications had been carefully studied.
We show that with proper implementation, per-layer clipping can be as memory-efficient and almost as time-efficient as non-private training for small- to moderate-scale workflows that run on single accelerators.
Additionally, we confirm that per-layer clipping with hand-set thresholds underperforms flat clipping and demonstrate that adaptively setting these thresholds eliminates potential performance losses.

\subsection{Per-layer Clipping DP-SGD Can Be Almost as Efficient as Non-private Sgd}
Per-layer clipping groups together parameters of a neural network layer (e.g., linear, convolution) and prescribes each of the $K$ layers of the network a clipping threshold $C_k$ to clip the gradient of that layer. This directly implies that gradient clipping for any layer can be performed as soon as backpropagation reaches that layer (to construct per-example gradients or norms) when parameter sharing is absent, and is unlike flat clipping which cannot be performed until backpropagation completes entirely.\footnote{Note the DP learning library Opacus~\citep{yousefpour2021opacus} has a per-layer clipping optimizer which supports clipping each layer with a separate threshold. But this implementation conducts per-layer clipping after backpropagation completes and instantiates all per-example gradients before clipping, which is inefficient.
}

Our efficient implementation of per-layer clipping clips layer-wise gradients as soon as the gradient with respect to outputs of that layer are returned from backpropagation.
The operations of clipping and summing per-example gradients can be fused once input activations, output gradients, and per-example gradient norms are known. In addition, per-example gradient norms can be cheaply computed without materializing actual per-example gradients in memory~\citep[Section 4]{li2022large}.
This implementation results in private training that is as memory-efficient as non-private training since per-example gradients are not instantiated, and almost as time-efficient per update since the extra computation involving gradient norm and gradient scaling are typically cheap.
Figure~\ref{fig:throughput-comparison} shows that carefully implemented per-layer clipping matches the memory profile and almost matches the training throughput of non-private learning for an autoregressive fine-tuning task with GPT-2 on a single GPU (we followed the same experimental protocol as that of Section 4 in~\citep{li2022large} for a fair comparison).
See Appendix~\ref{app:runtime} for additional experiments with head-to-head wall time comparisons.

\begin{figure}[htb]
\begin{center}
\begin{minipage}[t]{0.48\linewidth}
\centering
{\includegraphics[width=0.9\linewidth]{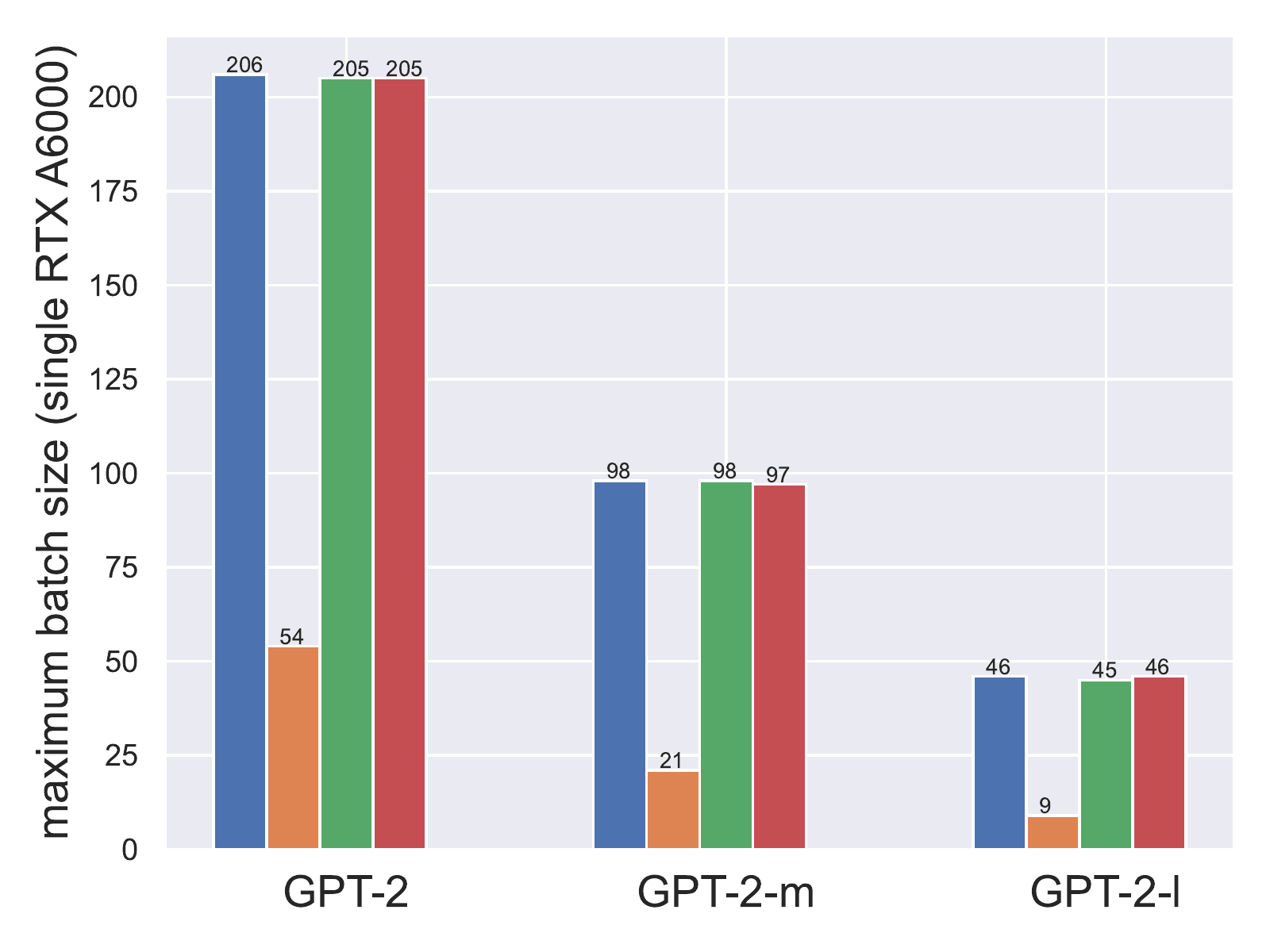}} \\
(a) Memory
\end{minipage}
\begin{minipage}[t]{0.48\linewidth}
\centering
{\includegraphics[width=0.9\textwidth]{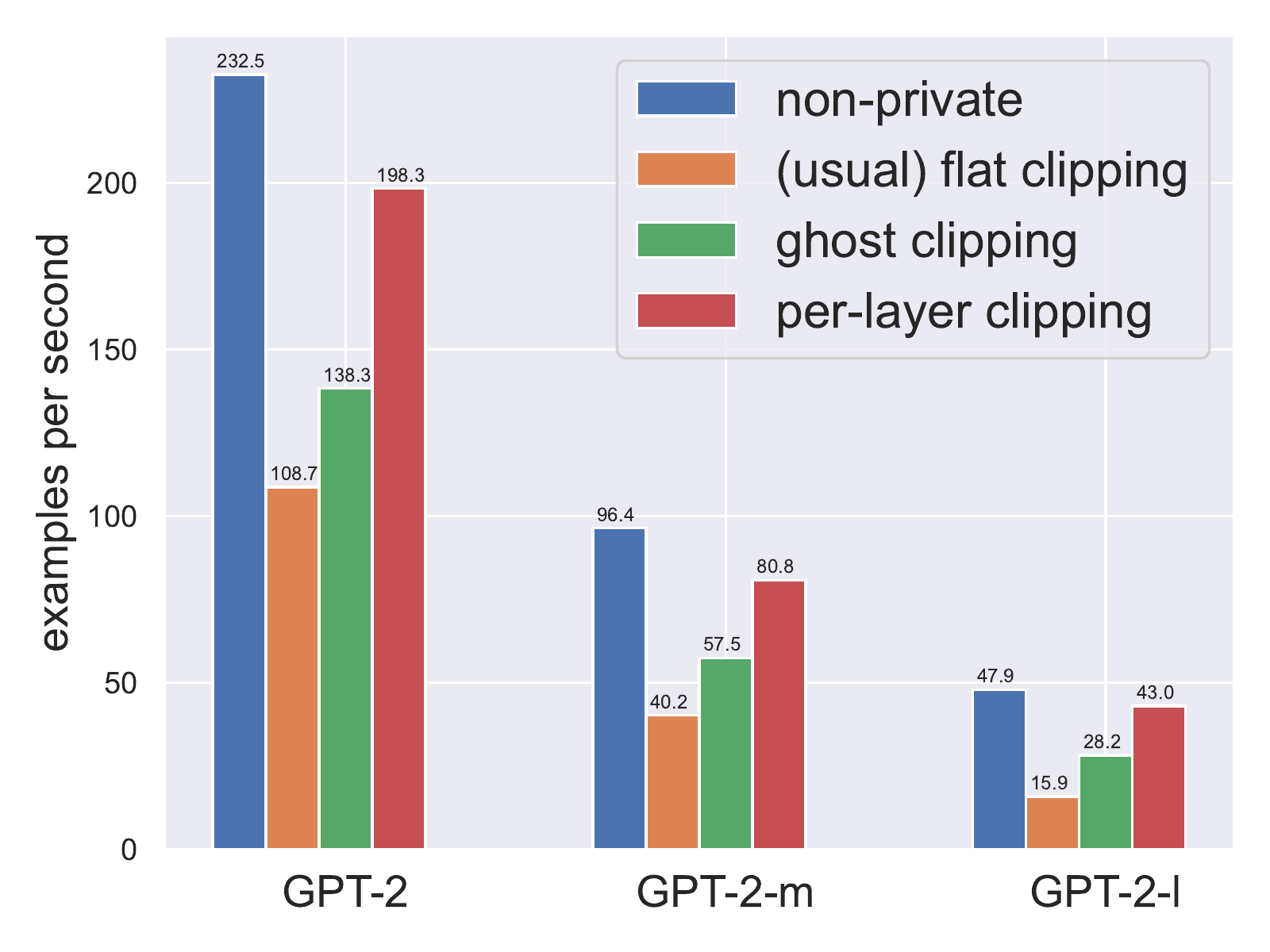}} \\
(b) Throughput
\end{minipage}
\end{center}
\caption{
Private learning with (adaptive) per-layer clipping can be almost as efficient as non-private learning (the throughput gap is less than 15\% in this case).
Here, ``(usual) flat clipping'' refers to the implementation which first creates and stores in memory all per-example gradients (e.g., adopted in Opacus~\citep{yousefpour2021opacus}).
Ghost clipping~\citep{li2022large} is based on flat clipping but avoids materializing per-example gradients by performing an additional backward pass each time. 
}
\label{fig:throughput-comparison}
\end{figure}

\subsection{Fixed Per-layer Clipping May Hurt the Utility}

Despite the computational advantages, per-layer clipping with fixed clipping thresholds set by hand (which we refer to as \emph{fixed per-layer clipping}) reportedly underperforms flat clipping \citep{mcmahan2018learning}. 
To further verify this and remove confounding effects of potentially suboptimal hyperparameters, we compare fixed per-layer clipping against (fixed) flat clipping on two tasks: 
1) training wide ResNet (WRN16-4) \citep{zagoruyko2016wide} from scratch to classify CIFAR-10 images, and
2) fine-tuning the pretrained RoBERTa-base for classifying sentiment on SST-2. 
We carefully tuned the clipping thresholds and learning rate for both clipping methods; see Appendix \ref{app:hyperparameter} for details.
Tables~\ref{table:fixed_global_cifar10} and \ref{table:fixed_global_sst2} confirm that per-layer clipping with hand-set fixed thresholds underperforms flat clipping. 
\begin{table}[h]
\footnotesize
\setlength\tabcolsep{2.4pt}
\caption{Fixed per-layer clipping underperforms (fixed) flat clipping.
}
\label{table:fixed_global}

\begin{subtable}[h]{0.45\textwidth}
\centering
\caption{CIFAR-10 validation accuracy over  3 seeds.} %
\begin{tabular}{l l ccc ccc}
\toprule
{Model} &
{Method} & 
\text{$\epsilon=3$} & 
\text{$\epsilon=8$} \\
\midrule
\multirow{2}{*}{WRN16-4} 
& Fixed per-layer & 60.6 & 67.8 \\ %
& Fixed flat & 63.1 & 73.9 \\
\bottomrule
\end{tabular}
\label{table:fixed_global_cifar10}
\end{subtable}
\hfill
\begin{subtable}[h]{0.45\textwidth}
\centering
\caption{SST-2 validation accuracy over 3 seeds.} %
\label{table:fixed_global_sst2}
\begin{tabular}{l l ccc ccc}
\toprule
{Model} &
{Method} & 
\text{$\epsilon=3$} & 
\text{$\epsilon=8$} \\
\midrule
\multirow{2}{*}{RoBERTa-base} 
& Fixed per-layer & 89.4 & 89.7 \\ %
& Fixed flat & 91.0 & 91.7 \\
\bottomrule
\end{tabular}

\end{subtable}
\end{table}

To understand why fixed per-layer clipping gives worse performance, we plot the per-layer gradient norms of randomly sampled CIFAR-10 examples for privately training WRN16-4 in Figure \ref{fig:grad-norm-cifar10} (see Appendix \ref{app:gradnorm-shift} for the setup). 
We observe that the general magnitudes of per-layer gradient norms change dramatically across training. 
Early on, gradient norms are generally uniformly low across all layers. 
As training proceeds, gradient norms for layers close to the input gradually become high. 
We present additional evidence for this phenomenon with language model fine-tuning in Appendix~\ref{app:gradnorm-shift}. 

These observations suggest that clipping with fixed layer-wise thresholds likely removes the structural relation between gradients of different layers. 
This incurs an extra source of bias in addition to the usual bias of flat clipping that alters the relation of gradients across samples, and makes balancing clipping bias and privacy noise throughout training more challenging. 
These observations motivate us to set the thresholds based on some adaptively estimated statistic of layer-wise gradients. 

\begin{figure}[ht]
\centering
\includegraphics[width=0.85\textwidth]{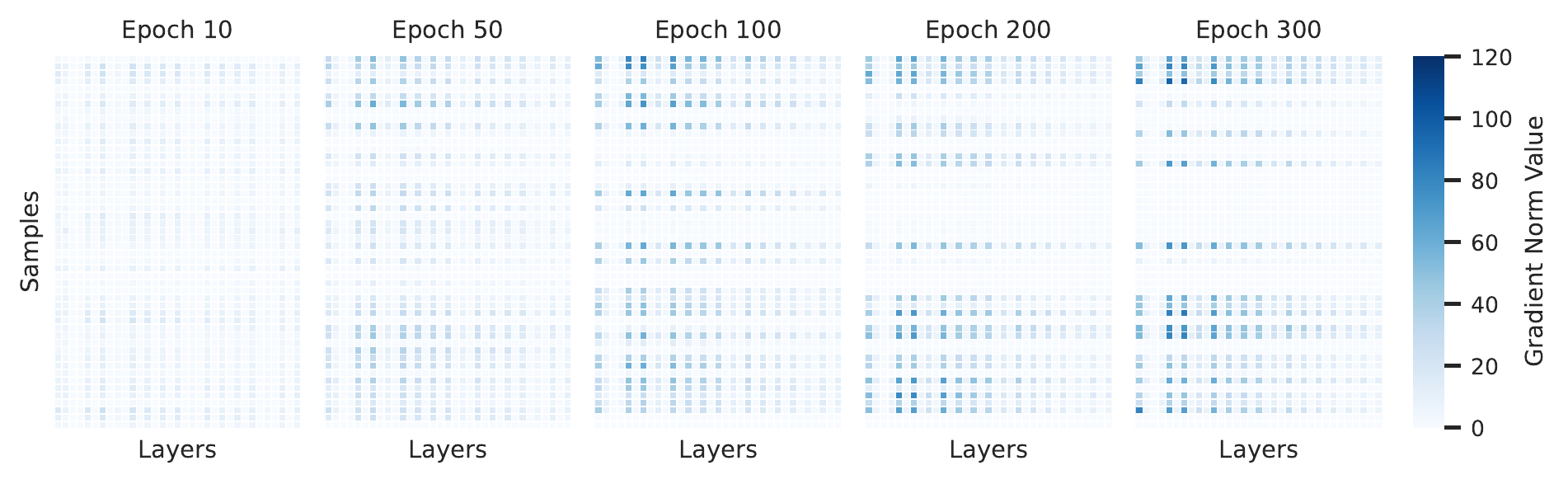}
\caption{The distribution of per-layer gradient norms shifts substantially across training. Each column represents one layer, and each row represents  one example. 
Layers of the neural network are placed from input (left) to output (right). 
Darker colors indicate higher values of gradient norms. 
}
\label{fig:grad-norm-cifar10}
\end{figure}

\subsection{Adaptive Per-layer Clipping Can Be as Effective as Flat Clipping}
\label{sec:adaptive-clipping}

\begin{wrapfigure}[15]{R}{0.40\textwidth}
    \centering
    \vspace{-4mm}
    \includegraphics[width=.38\textwidth]{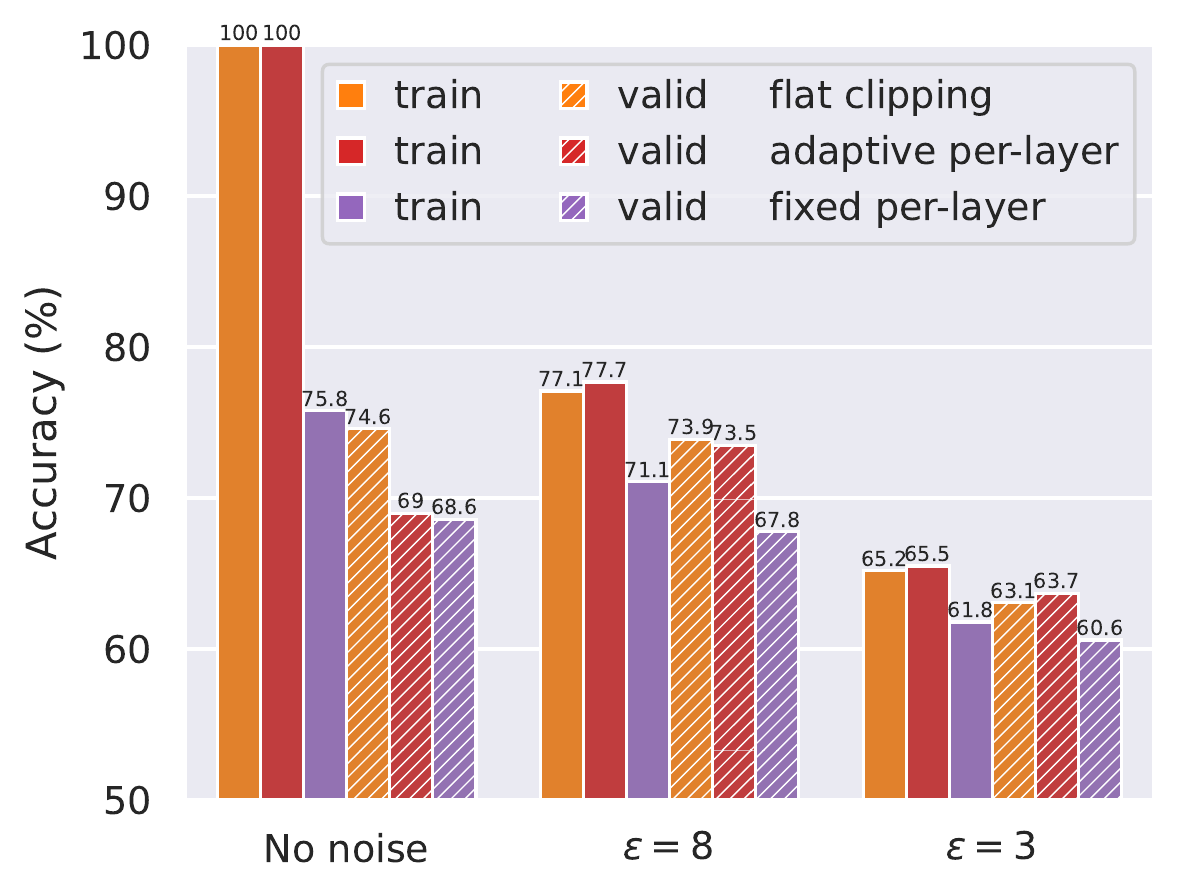}
    \caption{Adaptive per-layer clipping eliminates performance losses experienced by fixed per-layer clipping.}
    \label{fig:cifar_without_noise}
\end{wrapfigure}
To overcome the above performance issues of fixed per-layer clipping, we consider per-layer clipping with adaptive clipping thresholds that we herein refer to as \emph{adaptive per-layer clipping}. 
Our hope is that the adaptive thresholds can track gradient norm shift, capture gradient structure, and consequently mitigate the structural bias caused by clipping gradients of separate layers separately. 
One candidate statistic for setting the adaptive threshold is some quantile of gradient norms. 
Notably, \cite{andrew2019differentially} provided an effective way of estimating  quantiles privately for flat clipping via online convex optimization.
We adapt their algorithm to the per-layer setup, and let each layer maintain an online estimate of a target gradient norm quantile. %

Two questions arise with this formulation: 1) How should the per-layer gradient norm quantiles be privately estimated, and 2) how should the noise levels for different layers be decided? 
We address the two questions below. 
Algorithm \ref{alg:perlayer-DPSGD} delineates the overall procedure in pseudocode, where per-layer gradient clipping in conjunction with backpropagation occurs on lines 7-12, adaptive private quantile estimation occurs on lines 15-18, and noise allocation occurs on line 13. 
The pseudocode is based on DP-SGD, but its core ideas naturally apply to private versions of other first-order optimizers (e.g., DP-Adam).

\textbf{Estimating Quantiles Privately.}  We allocate some privacy budget (in practice $r=$1\% to 10\% of total budget) to estimate a target quantile of each layer's gradient norms. 
Clipping thresholds $C_1, ..., C_K$ are then set to these estimated quantiles. 
We record the number of gradients clipped before each parameter update and adjust the clipping threshold based on whether too many or too few are clipped. 
The central quantity which needs to be privatized is then the fraction of clipped gradients.
We introduce the additional noise multiplier $\sigma_{b}$ to privatize this fraction statistic (used in Gaussian mechanism). 
The new noise multiplier $\sigma_\text{new}$ (based on $1-r$ fraction of total budget) for noising parameter updates is computed with the following proposition, whose proof we defer to Appendix~\ref{app:proofs}.

\begin{restatable}{Proposition}{NewNoiseMultiplier} \label{prop:noise-multiplier}
Let $\sigma$ be the original noise multiplier for noising parameter updates to achieve a certain level of differential privacy (without private quantile estimation) and $\sigma_b$ be chosen for noising quantile estimates (release of the latter consumes $r$ fraction of the privacy budget). %
Then the new noise multiplier $\sigma_{\text{new}}$ for noising parameter updates (consuming $1-r$ fraction of the budget) is
\begin{equation}
\sigma_{\text{new}} = (\sigma^{-2}- K/(2\sigma_b)^2)^{-1/2}. \label{eq:sigma_new_sigma_b}    
\end{equation}
\end{restatable}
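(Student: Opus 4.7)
The plan is to work in zero-concentrated DP (zCDP), where independent Gaussian mechanisms compose additively and the privacy cost of $T$ subsampled Gaussian iterates is fully captured by a single per-step zCDP budget. Since the subsampling rate and the number of iterations are held fixed in the comparison, it suffices to match per-step zCDP budgets between the baseline (no quantile estimation, multiplier $\sigma$) and the modified algorithm (multiplier $\sigma_{\text{new}}$ for the parameter update and multiplier $\sigma_b$ for the quantile releases); the subsampling amplification then transfers identically through the accountant.

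First I would recall the Gaussian-mechanism zCDP formula: a release of a statistic with $L^2$ sensitivity $s$ perturbed by noise of standard deviation $\sigma'\cdot s$ contributes $1/(2\sigma'^2)$ zCDP. Applied to the baseline, the per-step cost is $1/(2\sigma^2)$. For a single step of the modified algorithm, the clipped-gradient release contributes $1/(2\sigma_{\text{new}}^2)$. Next I would model each per-layer quantile release following the subroutine of \cite{andrew2019differentially}: the fraction-of-clipped-gradients statistic is implemented via a $\pm 1/2$-valued indicator per example, whose add/remove sensitivity is $1/2$, with Gaussian noise of standard deviation $\sigma_b$ added to each of the $K$ layerwise counts. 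Each such release contributes $(1/2)^2/(2\sigma_b^2) = 1/(8\sigma_b^2)$ zCDP; summing across the $K$ layers (equivalently, viewing them as coordinates of one vector-valued Gaussian mechanism with $L^2$ sensitivity $\sqrt{K}/2$) gives $K/(8\sigma_b^2)$.

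Finally I would equate the total per-step zCDP of the modified algorithm to the baseline's budget,
\[
\frac{1}{2\sigma_{\text{new}}^2} + \frac{K}{8\sigma_b^2} \;=\; \frac{1}{2\sigma^2},
\]
multiply through by $2$, rearrange to $\sigma_{\text{new}}^{-2} = \sigma^{-2} - K/(2\sigma_b)^2$, and take the reciprocal square root to obtain the stated identity. The scalar $r$ in the proposition's statement is implicit: it equals $[K/(2\sigma_b)^2]/\sigma^{-2}$, the share of the inverse-squared-noise budget absorbed by the quantile mechanism, so picking $\sigma_b$ and picking $r$ are two parameterizations of the same dial. Consistency of the expression additionally requires $K/(2\sigma_b)^2 < \sigma^{-2}$ so that $\sigma_{\text{new}}$ is real and positive.

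The main obstacle is the sensitivity bookkeeping for the quantile mechanism: one has to pin down the neighboring-datasets convention and the exact form of the clipping-fraction statistic so that its sensitivity is $1/2$ rather than $1$. The factor $(2\sigma_b)^2$ in the final formula is precisely the fingerprint of the $\pm 1/2$-indicator representation used by \cite{andrew2019differentially}; a naive $\{0,1\}$ indicator under replace-one neighbors would give sensitivity $1$ and yield $\sigma_{\text{new}} = (\sigma^{-2} - K/\sigma_b^2)^{-1/2}$ instead. A secondary (but straightforward) subtlety is that both releases are produced from the same subsampled minibatch, so subsampling amplification applies identically on the two sides of the accounting identity and does not leak into the final expression.
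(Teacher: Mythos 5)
Your proof is correct and takes essentially the same route as the paper: equating the Gaussian-mechanism budget $1/\sigma^2$ to $1/\sigma_{\text{new}}^2 + K\cdot(1/2)^2/\sigma_b^2$, with the $\pm\tfrac12$-symmetrized clip indicator pinning the quantile release's sensitivity to $1/2$. The only cosmetic difference is that you phrase the budget explicitly in zCDP units $1/(2\sigma'^2)$ and then cancel the common factor of $2$, whereas the paper works directly with $S^2/(S\sigma')^2$; the resulting identity is identical.
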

\begin{Remark}
The private quantile estimation for $K$ groups costs  a fraction $r = K\sigma^2/(4\sigma_b^2)$ of privacy budget (in terms of R\'enyi differential privacy~\citep{mironov2017renyi}). 
\end{Remark}

\textbf{Allocating Noise.} 
The original Gaussian mechanism adds isotropic noise to statistics before their release~\citep{dwork2014algorithmic} which results in different coordinates experiencing the same amount of noise.
Yet, simply scaling different components with public quantities (before adding noise) allows different components to experience different levels of noise. 
As an example, let $\gamma_1, \cdots, \gamma_K$ be coefficients for scaling, and recall that $\tilde{\vg}_k$ is the sum of clipped gradients for layer / group $k$. 
Then, applying the Gaussian mechanism to the scaled $\hat{\vg} := (\hat{\vg}_1, ..., \hat{\vg}_K)$, where $\hat{\vg}_k:={\tilde{\vg}_k}/{\gamma_k}$, and rescaling back the privatized quantities afterwards ends up adding noise to $\tilde{\vg}_k$ that has standard deviation proportional to $\gamma_k$.

Among the possible ways of choosing $\{\gamma_1, ..., \gamma_K\}$, we outline two simple approaches that we found to be effective for different reasons in our empirical studies. We use the global strategy in all but experiments with GPT-3.
Appendix~\ref{app:noise-allocation} includes empirical studies of alternate strategies.
\begin{itemize}[leftmargin=6mm,noitemsep]
    \item \emph{Global strategy}: $\gamma_k = 1$ for $k\ \in [K]$. This strategy adds the same amount of noise to every component.  The total noise has squared $\ell_2$ norm $V_{G}\propto(\sum_{k}C_k^{2})\cdot(\sum_{k}d_{k})$.

    \item \emph{Equal budget strategy}: $\gamma_k = C_k$ for $k\ \in [K]$. Each group has the same amount of privacy budget.  The total noise has squared $\ell_2$ norm $V_{E}\propto K\sum_{k=1}^{K}d_{k}C_k^{2}$.
\end{itemize}

\begin{algorithm}
\caption{DP-SGD with adaptive per-layer clipping}
\label{alg:perlayer-DPSGD}
\begin{algorithmic}[1]
\STATE \textbf{INPUT}:  Private dataset  $\mathbb{D}=\{s_i\}_{i=1}^N$; initial iterate $\vtheta_0$; number of iterations $T$; learning rate $\eta_t$; learning rate for quantile estimation $\eta$; privacy parameters $\epsilon, \delta$; per-layer parameters $\{\vtheta_1, \dots, \vtheta_K\}$; initial  clipping thresholds $\{C_1,\dots, C_K\}$; weighting factors $\{\gamma_1,\dots,  \gamma_K\}$; target quantile $q$; sampling rate $\rho=B/N$.

\STATE $\sigma \leftarrow \text{PrivacyAccountant}(\epsilon, \delta, \rho, T)$

\STATE Choose $\sigma_b$ as the noise multiplier for private quantile estimation
\STATE Compute the new noise multiplier $\sigma_{\text{new}}$ for gradient privatization with \eqref{eq:sigma_new_sigma_b}

\FOR{$t=0$ to $T-1$}

    \STATE 
    
    Sample a minibatch $\mathcal{S}_t$ with sampling rate $\rho$ and perform the forward pass
    
    \FOR{$k=K$ to $1$}

        \STATE Compute per-sample gradient norms $\{\| \vg_{k}^{(i)} \|\}_{i\in\mathcal{S}_t}$  given activations and output gradients
        
        \STATE Compute $\tilde{\vg}_{k} \hspace{1mm} \gets \hspace{1mm}
            \sum_{i\in\mathcal{S}_t} \tilde{\vg}_{k}^{(i)} = 
            \sum_{i\in\mathcal{S}_t} \vg_{k}^{(i)} \cdot \min \{1, C_k / \| \vg_{k}^{(i)} \| \} $ with fused operation

        \STATE Record
        $\widebar{b}_k \gets \sum_{ i\in \mathcal{S}_t } \mathbbm{1}[ \|\vg_k^{(i)}\|\le C_k ] $ for quantile estimation

        \STATE Perform usual backpropagation to obtain input gradients if $k > 1$
    \ENDFOR

    \STATE Draw $\vz \gets (\vz_1, ..., \vz_K)$, where  $\vz_k\sim\mathcal{N}\left(0,\sigma_{\text{new}}^2  S^2 \gamma_k^2 \mI_{d_k}\right)$ and $S = (\sum_{k=1}^K C_k^2/\gamma_k^2)^{1/2}$

    \STATE $\vtheta_{t+1} \gets \vtheta_t-{\eta_t}\left(\tilde{\vg}+ \vz\right)/B$.

    \FOR{$k = 1$ to $K$}
        
        \STATE Draw $z_k \sim \mathcal{N}(0, \sigma_b^2) $, set $\tilde{b}_k \gets (\widebar{b}_k+ z_k) / B$, and $C_k \leftarrow C_k \cdot \exp(-\eta(\tilde{b}_k-q))$. 
    \ENDFOR
\ENDFOR

\STATE \textbf{return} $\vtheta_T$ (or some average of all iterates)
\end{algorithmic}
\end{algorithm}

With the tools of quantile estimation and noise allocation, we show that adaptive per-layer clipping matches the performance of flat clipping. . 
Figure~\ref{fig:cifar_without_noise} compares adaptive per-layer clipping against fixed per-layer clipping and flat clipping with and without noise for training WRN16-4 on CIFAR-10 (details in Appendix~\ref{app:hyperparameter}), and
shows that the performance of adaptive per-layer clipping matches that of flat clipping, while fixed adaptive clipping suffers large performance drops.
Section~\ref{sec:experiment} includes additional results to validate this point.

\section{Efficient Private Pipeline Parallelism with Per-device Clipping}
\label{sec:per_device_clipping}
Past works have shown that DP fine-tuning yields improved privacy-utility trade-offs with the use of larger / better pretrained models. 
We study whether this trend continues to hold as one leverages larger pretrained models by scaling DP training to work with one of the largest pretrained language models to date---the 175 billion-parameter GPT-3. 
The sheer size of this model presents challenges in computational efficiency, since model weights cannot be fit on a single device (e.g., GPU) and existing approaches for distributing computation don't tend to play well with flat clipping. 

We base our distributed DP training strategy off the popular \emph{pipeline parallelism} used in non-private training~\citep{huang2019gpipe,rasley2020deepspeed}.\footnote{Alternate parallelization schemes can be more flat clipping friendly (e.g., FSDP~\citep{fsdp}), but current open source implementations of these schemes are generally not light-weight fine-tuning friendly.}
We summarize the idea of pipeline parallelism here and defer to the cited works for the specifics. 
Pipeline parallelism first partitions the model into chunks of consecutive layers / blocks and distributes each onto a single accelerator. 
Forward computation with a microbatch (created through splitting a minibatch) then chains together local computations with each model piece (hosted on each accelerator) by communicating activations across accelerators. 
Backward computation (backpropagation) roughly reverses the above process, but on each accelerator, intermediate forward activations of the model piece are recomputed to reduce peak memory \cite[Section 2.3]{huang2019gpipe}.
Most importantly, pipeline parallelism simultaneously performs computation with different microbatches on different accelerators to reduce the overall idle time.
Devices synchronize after all microbatches finish their forward and backward computation and before the optimizer invokes the parameter update.

Flat clipping necessitates computing per-example gradient norms to compute factors that rescale gradients. 
This calls for the communication of per-example norms of local gradients on each device and leads to an inherent overhead in pipeline parallelism. 
We outline two potential approaches for accomplishing communication, both of which unfortunately lead to non-trivial slowdowns as well as complications in implementation.
The first approach synchronizes all devices after the full backward pass finishes for each microbatch (within a minibatch) so that each device will retain the same gradient norms for computing the scaling factor in clipping.
This approach incurs as many extra synchronization steps as the number of microbatches per minibatch and reduces training efficiency when the number of microbatches is large.
While executing primitives like all-gather with local gradient norms is not costly per se, the disruption these calls bring to the pipeline schedule is. 
Concretely, devices need to perform one of the following: (i) retain the unclipped local per-example gradients for a microbatch---and become idle due to pausing its processing of subsequent microbatches to avoid memory errors---until synchronization for the microbatch is called; (ii) offload the unclipped local per-example gradients to CPU only to transport them back on synchronization; (iii) rematerialize the microbatch's local gradient on synchronization. 
(i) is costly since it forces devices to be idle, (ii) is costly due to slow CPU-GPU data transfer, and (iii) is costly due to performing the extra round of backpropagation. 
To reduce the frequency of synchronization, a second approach may instead ask devices to only synchronize after the last microbatch has been processed. 
This approach, however, does not bypass the complications in the subsequent gradient rescaling step which requires local per-example gradients either be offloaded to CPU and moved back later or rematerialized on synchronization.

As the first attempt at experimenting with DP fine-tuning on huge models, we instead turn to an alternative \emph{per-device clipping} scheme, where each device is prescribed a clipping threshold for clipping per-example gradients of the hosted model piece.
Leveraging the equal budget strategy, the noise level added to gradients on each device is agnostic of the clipping thresholds of other devices (thus, no extra communication incurred). 
We present the full pseudocode of the algorithm in Appendix~\ref{app:per_device_clipping}. 
Notably, per-device clipping with DP LoRA fine-tuning allowed us to obtain improved results for a challenging summarization task (see Section~\ref{exp:summarization}).

\section{Experiments}\label{sec:experiment}

Previous sections verified that per-layer clipping has an efficiency advantage over flat clipping. 
We now show that adaptive per-layer clipping is competitive in terms of privacy vs utility.
Our experiments cover training wide ResNets from scratch, fine-tuning RoBERTa on GLUE tasks, and fine-tuning GPT-2 and GPT-3 for table-to-text generation and summarization tasks. For private quantile estimation, we use the geometric update rule by \cite{andrew2019differentially} and set $\eta=0.3$ for all experiments.
Reported numbers are averaged over three seeds unless otherwise stated.

\subsection{Privately Learn WideResNets for CIFAR-10 Classification}\label{subsec:cifar10}

We train a wide ResNet (WRN16-4, 2.8M trainable parameters) \citep{zagoruyko2016wide} from scratch for CIFAR-10 classification with differential privacy. We follow the implementation by  \cite{de2022unlocking}, e.g.,  batch normalization are replaced with group normalization and weight standardization is applied for convolutional layers, except that we do not use augmentation multiplicity for simplicity.
We set privacy parameter $\delta= 10^{-5}$ and choose $\epsilon$ from $\{1,3,5,8\}$, which are typical privacy parameters used in previous works.

We compare the performance of adaptive per-layer clipping with that of flat clipping. 
For both algorithms, we use  hyperparameters suggested by \cite{de2022unlocking}  and tune learning rates. %
We use a fraction $r=0.01$ of privacy budget for quantile estimation and choose the target quantile $q$ from $\{0.5, 0.6, 0.7\}$. For both algorithms we train for 300 epochs. 
We summarize the details in Appendix \ref{app:cifar}.  
Table~\ref{table:cifar_acc} shows that adaptive per-layer clipping achieves  training and validation accuracies on par with flat clipping for multiple choices of $\epsilon$. %

\begin{table}[th]
\newcommand{\bb}[1]{\textbf{#1}}
\footnotesize
\setlength\tabcolsep{2.4pt}
\caption{Adaptive per-layer clipping achieves  accuracy (in \%) on par with flat clipping for CIFAR-10.}
\centering
\begin{tabular}{c ccc ccc ccc ccc}
\toprule
\multirow{2}[2]{*}{Method} & 
\multicolumn{2}{c}{\text{$\epsilon=1$}} & 
\multicolumn{2}{c}{\text{$\epsilon=3$}} & 
\multicolumn{2}{c}{\text{$\epsilon=5$}} & 
\multicolumn{2}{c}{\text{$\epsilon=8$}} \\
\cmidrule(lr){2-3} \cmidrule(lr){4-5} \cmidrule(lr){6-7} \cmidrule(lr){8-9}
& Train & Valid & Train & Valid & Train & Valid & Train & Valid \\
\midrule
Flat clipping \citep{de2022unlocking} & 44.9 & 44.7 & 65.2 & 63.1 & 72.3 & 70.1 & 77.1 & 73.9 \\
Adaptive per-layer & 49.1 & 48.6 & 65.5 & 63.7 & 72.3 & 69.5 & 77.7 & 73.5 \\ %
\bottomrule
\end{tabular}
\label{table:cifar_acc}
\end{table}

\subsection{Privately Fine-tune RoBERTa on GLUE Tasks} \label{subsec:roberta}

Our first experiment aims to show that adaptive per-layer clipping is broadly competitive with existing approaches in the literature in terms of the privacy-utility tradeoff. 
In this experiment, we fine-tune RoBERTa-base (125M) and RoBERTa-large (355M)~\citep{liu2019roberta} on SST-2, QNLI, QQP, and MNLI from the GLUE benchmark~\citep{wang2018glue} with differential privacy. 
We set $\epsilon\in \{3,8\}$ and $\delta=1/n^{1.1}$, where $n$ is the size of training set.
We tune the learning rate, batch size, and target quantile on SST-2's training data and transfer the best hyperparameters to other tasks. 
We use $r=0.1$ of the privacy budget for quantile estimation, choose the target quantile $q$ from $\{0.5, 0.75, 0.85\}$, and set the number of training epochs $E=20$.  %
Table~\ref{table:glue} shows that adaptive per-layer clipping obtains accuracies competitive with established approaches in the literature under fixed privacy constraints.

Our second controlled experiment shows that adaptive per-layer clipping gives utility that is competitive with flat clipping under fixed training epochs (when both approaches fine-tune the same set of parameters), effectively verifying its wall time advantage. 
In this experiment, we constrain the number of training epochs $E$ to be one of $\{3, 10, 20, 30\}$ and fine-tune RoBERTa models on SST-2 with the two clipping methods.
Table~\ref{table:glue_epoch_sst_2} shows that adaptive per-layer clipping is 
on par with flat clipping in accuracy for all setups and justifies the wall time advantage claim given that adaptive per-layer clipping is faster per epoch. 

\cite{li2022large} demonstrated that the performance of private fine-tuning for text classification with various algorithms improves with a text infilling formulation. 
The infilling technique reformulates the optimization problem and is orthogonal to the algorithmic aspects under study. 
To ensure our comparisons are fair, all experiments in this section follow the usual BERT fine-tuning setup without text infilling. 
Additional details of the two experiments can be found in Appendix \ref{app:glue}.

\begin{table}[th]

\newcommand{\bb}[1]{\textbf{#1}}

\footnotesize
\setlength\tabcolsep{1.7pt}
\caption{
Adaptive per-layer clipping matches accuracy (in \%) results in the literature on GLUE tasks.\protect\footnotemark
}
\centering
\begin{tabular}{l c cccc cccc}
\toprule
\multirow{2}[2]{*}{Model} &
\multirow{2}[2]{*}{Method} & 
\multicolumn{4}{c}{\text{$\epsilon=3$}} & 
\multicolumn{4}{c}{\text{$\epsilon=8$}} \\
\cmidrule(lr){3-6} \cmidrule(lr){7-10}
& & MNLI-(m/mm) & QQP & QNLI & SST-2
& MNLI-(m/mm) & QQP & QNLI & SST-2 \\
\midrule
\multirow{ 4}{*}{RoBERTa-base} & \cite{yu2021large} & - & - & - & - & 80.1 & 85.5 & 87.2 & 91.6 \\
                               & \cite{li2022large} & $82.47/82.10$ & $85.41$ & $84.62$ & $86.12$ & $83.30/83.13$ & $86.15$ & $84.81$ & $85.89$ \\
                               & \cite{yu2022differentially} & - & - & - & - & $83.5$ & $85.7$ & $87.3$ & $92.2$ \\
                                & Adaptive per-layer & $82.83/83.27$ & $85.67$ & $86.13$ & $92.03$ & $83.70/83.97$ & $86.23$ & $87.13$ & $92.40$ \\
\midrule
\multirow{ 4}{*}{RoBERTa-large} & \cite{yu2021large} & - & - & - & - & 86.1 & 86.7 & 90.0 & 93.0 \\
                                &  \cite{li2022large} & $85.53/85.81$ & $86.65$ & $88.94$ & $90.71$ & $86.28/86.54$ & $87.49$ & $89.42$ & $90.94$ \\
                                & \cite{yu2022differentially} & - & - & - & - & $87.8$	& $87.4$ & $90.8$ & $95.3$ \\
                                & Adaptive per-layer & $87.10/87.20$ & $86.80$ & $89.80$ & $93.87$ & $87.67/87.57$ & $87.20$ & $90.77$ & $94.03$ \\
\bottomrule
\end{tabular}
\label{table:glue}
\end{table}
\footnotetext{Results in \cite{yu2021large,yu2022differentially} on MNLI are the average of the matched and mismatched accuracy.}

\begin{table}[th]
\centering
\newcommand{\bb}[1]{\textbf{#1}}
\footnotesize
\setlength\tabcolsep{2.4pt}
\caption{Adaptive per-layer clipping is competitive with flat clipping on SST-2 in accuracy (\%) under fixed fine-tuning epochs ($E$). Models are full fine-tuned.
Numbers in parentheses are standard deviations from three independent runs. 
See results for $\epsilon=8$ in Appendix~\ref{app:glue_controlled_complementary}.
}
\begin{tabular}{l c cccc}
\toprule
\multirow{2}[2]{*}{Model} &
\multirow{2}[2]{*}{Method} & 
\multicolumn{4}{c}{\text{$\epsilon=3$}} \\
\cmidrule(lr){3-6}
& & $E=3$ & $E=10$ & $E=20$ & $E=30$  \\
\midrule
\multirow{2}{*}{RoBERTa-base} %
& Flat clipping (tuned) & $88.70 (0.52)$ & $90.17 (1.10)$ & $91.47 (1.07)$ & $91.60 (0.95)$\\
& Adaptive per-layer & $90.50 (1.21)$ & $91.90 (0.72)$ & $92.33 (0.42)$ & $92.23 (0.06)$ \\
\midrule
\multirow{2}{*}{RoBERTa-large} & Flat clipping (tuned) & $92.20 (0.26)$ & $93.07 (0.75)$ & $93.67 (0.40)$ & $94.23 (0.67)$ \\
& Adaptive per-layer & $91.73 (0.59)$ & $93.27 (0.45)$ & $93.90 (0.26)$ & $94.13 (0.38)$ \\
\bottomrule
\end{tabular}
\label{table:glue_epoch_sst_2}
\end{table}

\subsection{Privately Fine-tune on Langauge Generation Tasks} \label{subsec:gpt}

\setlength{\tabcolsep}{2.5pt}
\renewcommand{\arraystretch}{0.75}
\begin{table}[thb]
\footnotesize
\caption{
Adaptive per-layer clipping matches the performance of flat clipping for full fine-tuning under the same number of training epochs for common privacy levels.
Results based on fine-tuning GPT-2 on E2E and DART.
Numbers in the column ``flat'' are reported by~\cite{li2022large}.
}
\centering
\begin{tabular}{l c cc cc}
\toprule
\multirow{2}[0]{*}{Metric} & \multirow{2}[0]{*}{DP Guarantee} & \multicolumn{2}{c}{E2E} & \multicolumn{2}{c}{DART} \\
\cmidrule(lr){3-4} \cmidrule(lr){5-6}
 & & {Adaptive per-layer} & {Flat} & {Adaptive per-layer} & {Flat} \\

\midrule
\multirow{3}[1]{*}{BLEU}

 & $\epsilon=3$ & 61.101 & 61.519     & 31.851 & 31.025  \\
 & $\epsilon=8$ & 63.416 & 63.189     & 34.166 & 35.057  \\
 & non-private    & -  & 69.463         & -  & 42.783  \\

\midrule
\multirow{3}[1]{*}{ROUGE-L}

 & $\epsilon=3$ & 65.120 & 65.670   & 51.519 & 52.063 \\
 & $\epsilon=8$ & 66.689 & 66.429   & 52.877 & 54.576 \\
 & non-private    & -  & 71.359         & -  & 56.717   \\

\bottomrule
\end{tabular}
\label{table:table2text_trimmed}
\end{table}

\paragraph{Table-To-Text Generation.}
We compare adaptive per-layer clipping against flat clipping for full fine-tuning with GPT-2 on the E2E~\citep{novikova2017e2e} and DART~\citep{nan2020dart} table-to-text generation tasks. 
Since~\cite{li2022large} performed extensive tuning on these tasks for flat clipping, we recall their results here.
For runs with adaptive per-layer clipping, we reused hyperparameter values tuned for SST-2, but re-tuned the target quantile with the E2E validation set and constrained the batch size and number of training epochs to be the same as that for flat clipping; see Appendix~\ref{app:lang_gen} for details.
Table~\ref{table:table2text_trimmed} confirms that DP learning with adaptive per-layer clipping performs comparably to flat clipping under a given epoch constraint.
\begin{wraptable}[14]{r}{6.3cm}
\footnotesize
\caption{Fine-tuning GPT-3 with DP LoRA achieves improved privacy-utility trade-off. Metrics are ROUGE scores.}
\vspace{-3mm}
\begin{tabular}{l c ccc}\\
\toprule
Model+Method+DP guarantee & R-1 & R-2 & R-L \\
\midrule
\textbf{Flat clipping} & & & \\
\hspace{1mm} GPT-2-xl LoRA ($\epsilon= 0.25$) & 8.0 & 2.7 & 6.8 \\
\hspace{1mm} GPT-2-xl LoRA ($\epsilon=1$)     & 30.0 & 11.0 & 25.3 \\
\hspace{1mm} GPT-2-xl LoRA ($\epsilon=4$)     & 35.4 & 14.3 & 29.9 \\
GPT-2-xl LoRA (non-private)     & 46.2 & 23.7 & 39.4 \\
\midrule
\textbf{Per-device clipping} & & & \\
\hspace{1mm} GPT-3 LoRA ($\epsilon= 0.25$) & 42.0 & 20.4 & 35.7 \\
\hspace{1mm} GPT-3 LoRA ($\epsilon= 1$)    & 48.0 & 25.4 & 41.3 \\
\hspace{1mm} GPT-3 LoRA ($\epsilon= 4$)    & 48.5 & 26.4 & 42.0 \\
GPT-3 LoRA (non-private) & 53.8 & 29.8 & 45.9 \\

\midrule
GPT-3 0-shot             & 27.4 & 9.0 & 20.9 \\
GPT-3 4-shot             & 42.1 & 19.6 & 34.2 \\

\bottomrule
\end{tabular}
\label{table:gpt3_trimmed}
\end{wraptable}

\paragraph{Dialog Summarization.}\label{exp:summarization}
We use the SAMSum dialog summarization task as a testbed for studying model scaling~\citep{gliwa2019samsum}.\footnote{We believe the chance of contamination occurring for this task to be small. See Appendix~\ref{app:per_device_clipping} for discussions.}
This task is more challenging than previously tested ones since its training set is small (less than 15k examples) and inputs are long. 
We fine-tune both GPT-2-xl and the (original) 175 billion-parameter GPT-3 with LoRA~\citep{hu2021lora} with and without DP, and compare them against in-context learning with GPT-3~\citep{brown2020language}.
Table~\ref{table:gpt3_trimmed} shows that GPT-3 fine-tuned at $\epsilon=1$ outperforms non-privately fine-tuned GPT-2-xl and in-context learning with 4 demonstrations (the maximum that can be fitted within the context window of 2048 tokens). 
See Appendix~\ref{app:per_device_clipping} for more details.

\section{Related Work}

Training large deep learning models with DP has gained momentum in the recent years. For instance, \cite{anil2021large} privately pretrained BERT models, and \cite{kurakin2022toward} privately trained deep ResNets on ImageNet.
Recent works have also investigated private fine-tuning \citep{kerrigan2020differentially,tian2021seqpate,senge2021one,hoory2021learning,basu2021benchmarking,yu2021large} and observed that one can achieve favourable privacy-utility trade-offs with large pretrained models for image classification \citep{luo2021scalable,tramer2021differentially,golatkar2022mixed,de2022unlocking,mehta2022large} and tasks in NLP \citep{yu2022differentially,li2022large,li2022does}.
Group-wise clipping schemes considered in our work improve the efficiency of DP-SGD and further this line of research by making scaling private learning easier.

Several works considered adjusting the clipping threshold of DP-SGD adaptively during training \citep{pichapati2019adaclip,asi2021private}. 
The most related to us is that by \cite{andrew2019differentially} who set the threshold for flat clipping as privately estimated quantile of gradient norms.
They showed that doing so eased hyperparameter tuning without affecting the final model performance.
Different from these works, ours considers per-layer clipping, where adapting the clipping threshold plays a more crucial role for obtaining good utility. More discussion on related work is in Appendix~\ref{app:more_related}.

\section{Conclusion and Limitation}
We showed that group-wise clipping schemes are effective tools to improve the efficiency of DP-SGD for small- to moderate-scale workflows that run on single accelerators, and to avoid overheads in private distributed pipeline parallel training of models that do not fit on single accelerators. 
We showed that adaptive clipping algorithms can mitigate known utility losses associated with using fixed and hand-tuned thresholds. 
Designing group-wise clipping algorithms that can Pareto-dominate flat clipping in terms of privacy vs utility (or show such schemes are unlikely to exist) is an interesting future direction.

Group-wise clipping schemes offer various advantages but are not without limitations and drawbacks.
First, group-wise clipping algorithms tend to have a few extra hyperparameters. This could lead to a need of additional tuning when optimal hyperparameters differ across tasks and domains (although we showed that across the tasks we studied, the optimal values for most of the additional hyperparameters remained stable). 
Second, the per-layer clipping scheme gives limited efficiency improvements in non-distributed settings when only few parameters are fine-tuned.
Lastly, care must be taken during implementation to fully realize the gains of adaptive per-layer clipping in practice.

\section*{Ethics Statement}
Our work studies and improves differentially private learning algorithms along two distinct axes and has the potential to expand the scope of machine learning on sensitive data. 
We argue that improvements in differentially private machine learning alone should not be the sole motivation to expand the collection of user data or make aggressive the training of machine learning models on such data without considering the potential long-term harms of developing and releasing models trained with sensitive data. 

Our efforts on scaling differentially private fine-tuning to work with GPT-3 are purely motivated by an academic research question. 
We note there are privacy concerns associated with the pretraining corpus of GPT-3, and thus a model fine-tuned from GPT-3 should not be deployed without undergoing careful privacy audits. 
For deployment purposes, we suggest fine-tuning only models pretrained on carefully curated corpora. 

Lastly, we note that language is inherently complex, and its complexity may well be reflected in datasets for sophisticated tasks such as dialog completion. 
Differential privacy as a guarantee alone may fail to fulfill the desired privacy goals if example boundaries are not set appropriately. 

\subsubsection*{Acknowledgments}
XL is supported by a Stanford Graduate Fellowship.

\clearpage
\newpage
\bibliography{main.bbl}
\bibliographystyle{iclr2023_conference}

\clearpage
\newpage

\appendix

\section{Experiment Details} \label{app:hyperparameter}

\subsection{Setup for CIFAR-10 Classification with WRN16-4 Model} \label{app:cifar}

In the paper, we have used CIFAR-10 classification task with wide residual network (WRN16-4) to demonstrate some points. To increase the reproducibility, we describe the detailed setting of these experiments.

We modify the standard WRN16-4~\citep{zagoruyko2016wide} following the suggestions of \cite{de2022unlocking}, i.e., replacing batch normalization with group normalization and using weight standardization for convoluntional weights, except that we do not use augmentation multiplicity for simplicity. Specifically, for flat clipping, \cite{de2022unlocking} uses a handed tuned clipping threshold $C=1$ and learning rate $\text{lr}=4$. Due to the fact that the learning rate and clipping thresholds jointly affect the performance in a complex way, we set the fixed per-layer clipping thresholds and the adaptive per-layer clipping thresholds so that they both have equivalent global threshold $C=1$. For fixed per-layer clipping, each layer clipping threshold is $C/\sqrt{K}$ where $K$ is the number of layers. For adaptive clipping thresholds $C_1, ..., C_K$, we rescale them $\tilde{C}_k := C \cdot C_k/\sqrt{\sum_k C_k^2}$ for all $k \in [K]$. 

To make the comparison more fair, we carefully tune the hyper-parameters of fixed per-layer clipping in Table \ref{table:fixed_global}, Table \ref{table:ablation_4_fixed_perlayer} and Figure \ref{fig:cifar_without_noise}. We find that small fixed thresholds can improve the performance of fixed per-layer clipping. We try different $C$'s from \{1.0, 0.5, 0.1, 0.05\} while making $C\cdot\text{lr}$ constant (which is critical for SGD), and set clipping threshold $C_k = C / \sqrt{K}$ for each layer. Finally, for fixed per-layer clipping, We choose the best hyper-parameter combinations ($C=0.05, \text{lr}=40$) and ($C=0.1, \text{lr}=20$) for $\epsilon=\{3,8\}$,  respectively.

For both fixed and adaptive per-layer clippings, we use the global strategy for noise allocation, i.e.,  $\gamma_k=1$ for all $k\in [K]$. Moreover, we use the same optimizer, weight decay, momentum, learning rate schedule, batch size and max epochs as flat clipping, as shown in Table \ref{tab:tuning_cifar10}. We tune the learning rate from two choices $\{2,4\}$ for all three algorithms. For adaptive per-layer clipping, we use a fraction $r=0.01$ of privacy budget to estimate quantiles and quantile learning rate $\eta=0.3$. We tune the target quantile from three choices $\{0.5, 0.6, 0.7\}$. We will evaluate the hyperparameter sensitivity in ablation study (Section \ref{app:ablation}).
Hyperparameters are tuned by training from scratch on training set and evaluating on test set. We use the best hyperparameter combinations for different $\epsilon$ respectively and report the test set accuracy of the last epoch in Table \ref{table:cifar_acc}.

\begin{table*}[ht]
\footnotesize
\setlength\tabcolsep{2.4pt}
\centering 
\caption{Hyper-parameters of flat clipping and per-layer clipping for WRN16-4 on CIFAR-10.} \label{tab:tuning_cifar10}
\begin{tabular}{l c c c}
\toprule
 & Adaptive per-layer & Fixed per-layer & Flat clipping \\
\midrule
\textbf{Optimizer} & \multicolumn{3}{c}{SGD} \\
\textbf{Weight Decay} & \multicolumn{3}{c}{0} \\
\textbf{Momentum} & \multicolumn{3}{c}{0} \\
\textbf{Learning Rate Schedule} & \multicolumn{3}{c}{Constant} \\
\textbf{Batch Size} & \multicolumn{3}{c}{4096} \\
\textbf{Max Epochs} & \multicolumn{3}{c}{300} \\
\midrule
\textbf{Learning Rate} & \{2, 4\} & equivalent \{2, 4\} & \{2, 4\} \\
\textbf{Allocation Method} & Global & Global & - \\

\textbf{Private Quantile Relative Budget $r$} & 1\% & - & -\\
\textbf{Quantile Learning Rate $\eta$} & 0.3 & - & -\\
\textbf{Target Quantile $q$} & \{0.5, 0.6, 0.7\} & - & - \\
\bottomrule
\end{tabular}
\end{table*}

\subsection{Setup for GLUE Tasks with RoBERTa Models } \label{app:glue}

To evaluate the performance of adaptive per-layer clipping, we conduct experiments on GLUE tasks by fine-tuning RoBERTa-base and RoBERTa-large models with differential privacy.

The optimizer setup and dropout rates are the same for adaptive per-layer clipping, fixed per-layer clipping, fixed flat clipping and adaptive flat clipping, as shown in Table \ref{tab:roberta_tuning_sst2}.

For per-layer clipping, we use the global strategy for noise allocation, i.e., $\gamma_k=1$ for all $k\in [K]$.

We tune other hyperparameters: peak learning rate, batch size, clipping thresholds for fixed per-layer clipping, target quantile $q$ for adaptive per-layer clipping, as shown in the bottom half of Table \ref{tab:roberta_tuning_sst2}.

To tune hyperparameters fairly, we split the training set of SST-2 into two parts: a new training set containing 80\% of original training set and a validation set containing the remaining. 
We select the best hyperparameters with the performance on the validation set, averaging over 3 different seeds. 
Table \ref{tab:roberta_tuning_sst2} shows the best hyperparameter combinations we use for adaptive and fixed per-layer clipping.

For experiments in Section~\ref{subsec:roberta}, we set the privacy budget for quantile estimation $r=10\%$. 
Figure \ref{fig:ablation-2-quantile-budget} suggests that using smaller values such as $r=1\%$ or $r=5\%$ may produce slightly better results.

We transfer hyperparameters tuned on SST-2 to the remaining GLUE tasks. Specifically, we follow \cite{li2022large} and keep the sampling rate the same across different datasets. 

For the GLUE tasks considered, we find that training for more epochs generally improves the performance for both flat and per-layer clipping. 
To ensure runs finish under realistic training times, we fix the max epochs to be 20 for experiments for the adaptive per-layer clipping runs reported in Table~\ref{table:glue}. 
We report the accuracies on the original dev sets for each GLUE task.

For the second experiment in Section~\ref{subsec:roberta}, we only optimize with respect to the self-attention layers and the classification head parameters for both flat clipping and adaptive per-layer clipping so that the total computational cost of this controlled ablation study is manageable.

\begin{table*}[ht]
\footnotesize
\setlength\tabcolsep{2.4pt}
\centering 
\caption{Hyper-parameters of per-layer clipping for RoBERTa on SST-2 dataset, where the \textbf{text in bold} denotes the hyper-parameters we eventually use.} \label{tab:roberta_tuning_sst2}
\begin{tabular}{l cc cc}
\toprule
\textbf{Model} & \multicolumn{2}{c}{RoBERTa-base} & \multicolumn{2}{c}{RoBERTa-large}  \\
\textbf{Method} & Adaptive & Fixed  & Adaptive & Fixed \\ 
\midrule
\textbf{Optimizer} & \multicolumn{4}{c}{Adam} \\
\textbf{Adam ($\beta_1$, $\beta_2$)} & \multicolumn{4}{c}{(0.9, 0.98)}   \\
\textbf{Adam $\epsilon$} & \multicolumn{4}{c}{$10^{-6}$}   \\ 
\textbf{Weight Decay} & \multicolumn{4}{c}{0}   \\ 
\textbf{Warm-up Ratio} & \multicolumn{4}{c}{0.06} \\ 
\textbf{Learning Rate Schedule} & \multicolumn{4}{c}{Linear Decay}   \\ 
\textbf{Dropout} & \multicolumn{4}{c}{0.1} \\ 
\textbf{Attention Dropout} & \multicolumn{4}{c}{0.1} \\ 
\textbf{Max Epochs} & \multicolumn{4}{c}{20} \\
\midrule
\textbf{Peak Learning Rate} & $\{1,\textbf{2},4\}\times10^{-4}$ & $\{\textbf{1},2,4\}\times10^{-4}$ & $\{1,\textbf{2},4\}\times10^{-4}$ & $\{1,\textbf{2},4\}\times10^{-4}$\\ 
\textbf{Batch Size} & $\{1,\textbf{2},4\}\times2^{9}$ & $\{\textbf{1},2,4\}\times2^{9}$ & $\{\textbf{1},2,4\}\times2^{9}$ & $\{\textbf{1},2,4\}\times2^{9}$  \\ 
\textbf{Allocation Method} & \multicolumn{4}{c}{Global} \\
\textbf{Init Threshold} & 1.0 & \{\textbf{0.1}, 0.5, 1.0\} & 1.0 & \{0.1, 0.5, \textbf{1.0}\} \\
\textbf{Private Quantile Relative Budget $r$} & 10\% & - & 10\% & - \\ 
\textbf{Quantile Learning Rate $\eta$} & 0.3 & - & 0.3 & - \\ 
\textbf{Target Quantile $q$} & \{0.5, 0.75, \textbf{0.85}\} & -  & \{0.5, 0.75, \textbf{0.85}\} & -  \\
\bottomrule
\end{tabular}
\end{table*}

\subsection{Setup for Language Generation Tasks with GPT-2} \label{app:lang_gen}

We reused most of the hyperparameters specific to adaptive quantile estimation based on tuning results on SST-2. 
We retuned the target quantile parameter as we observed that optimal values of this parameter tend to be different for different tasks. 
To ensure a fair comparison against full fine-tuning, we constrain the runs with adaptive per-layer clipping to have the same batch size and training epochs as in~\citep{li2022large}. 
We adopted the default values set by the Hugging Face \texttt{transformers}  library for Adam's $\beta_1$, $\beta_2$, and $\epsilon$. 
Table \ref{tab:gpt2_tuning} contains the full set of hyperparameters.

\begin{table*}[ht]
\footnotesize
\setlength\tabcolsep{2.4pt}
\centering 
\caption{Hyperparameters for full fine-tuning GPT-2 with adaptive per-layer clipping. Numbers in bold are best performing hyperparameters used for reporting final results. 
} \label{tab:gpt2_tuning}
\begin{tabular}{l cc}
\toprule
\textbf{Model} & \multicolumn{2}{c}{GPT-2} \\
\textbf{Dataset} & E2E & DART \\ 
\midrule
\textbf{Optimizer} & \multicolumn{2}{c}{Adam} \\
\textbf{Adam ($\beta_1$, $\beta_2$)} & \multicolumn{2}{c}{(0.9, 0.999)}   \\
\textbf{Adam ($\epsilon$)} & \multicolumn{2}{c}{1e-8}   \\
\textbf{Weight Decay} & \multicolumn{2}{c}{0} \\
\textbf{Learning Rate Schedule} & \multicolumn{2}{c}{Linear Decay} \\
\textbf{Batch Size} & 1000 & 1500 \\
\textbf{Max Epochs} & 10 & 15 \\
\textbf{Learning Rate} & \multicolumn{2}{c}{$2 \times 10^{-3}$ } \\
\textbf{Allocation Method} & \multicolumn{2}{c}{Global} \\
\textbf{Init Threshold} & \multicolumn{2}{c}{0.01} \\
\textbf{Private Quantile Relative Budget $r$} & \multicolumn{2}{c}{1\%} \\
\textbf{Quantile Learning Rate $\eta$} & \multicolumn{2}{c}{0.3} \\
\textbf{Target Quantile $q$} & $\{\mathbf{0.3}, 0.5, 0.7, 0.9\}$ & reuse best of left \\
\bottomrule
\end{tabular}
\end{table*}

\section{Additional Experiments on Gradient Norm Shift}\label{app:gradnorm-shift}

In this section, we illustrate the distribution of gradient norms shift in both CIFAR-10 training and SST-2 fine-tuning. To visualize the gradient norms, we first randomly select some samples from the training set, and take the checkpoints at different epochs of a privately trained  model with adaptive per-layer clipping and the privacy parameter $\epsilon=8$. For each sample, we compute the gradient norm of each layer. Specifically, for CIFAR-10, we ramdonly select 32 samples and place layers of WRN16-4 from input (left) to output (right) in Figure \ref{fig:grad-norm-cifar10}. %

For SST-2, we randomly select 4,096 samples and some  layers in the RoBERTa-base model, and plot the histogram of the per-sample per-layer gradient norms in Figure \ref{fig:gnorm_distribution_sst2}  across the first few epochs. It is worth noting that we found that for fine-tuning SST-2 with RoBERTa-base, it is true for many layers that the 85\% clipping threshold (see red dashed line in Figure \ref{fig:gnorm_distribution_sst2}) is just the point can split samples into a group with small gradient norms and a group with large ones.

Both of Figure \ref{fig:grad-norm-cifar10} and Figure \ref{fig:gnorm_distribution_sst2} demonstrate that the distribution of gradient norms  is complex and may related to many factors: (1) \textbf{Iterations \& Samples}: gradient norms are small and spread out across layers in the early epochs, and 
as the training process goes on, per-sample gradients become divided, the large becomes larger and the small becomes smaller; %
(2) \textbf{Layers}: gradient norms of layers close to the input are larger than those of layers close to output, it is more prominent in the later stages of training, but it's aligned well across samples. %

\begin{figure}[!ht]
    \centering
    \includegraphics[width=0.95\linewidth]{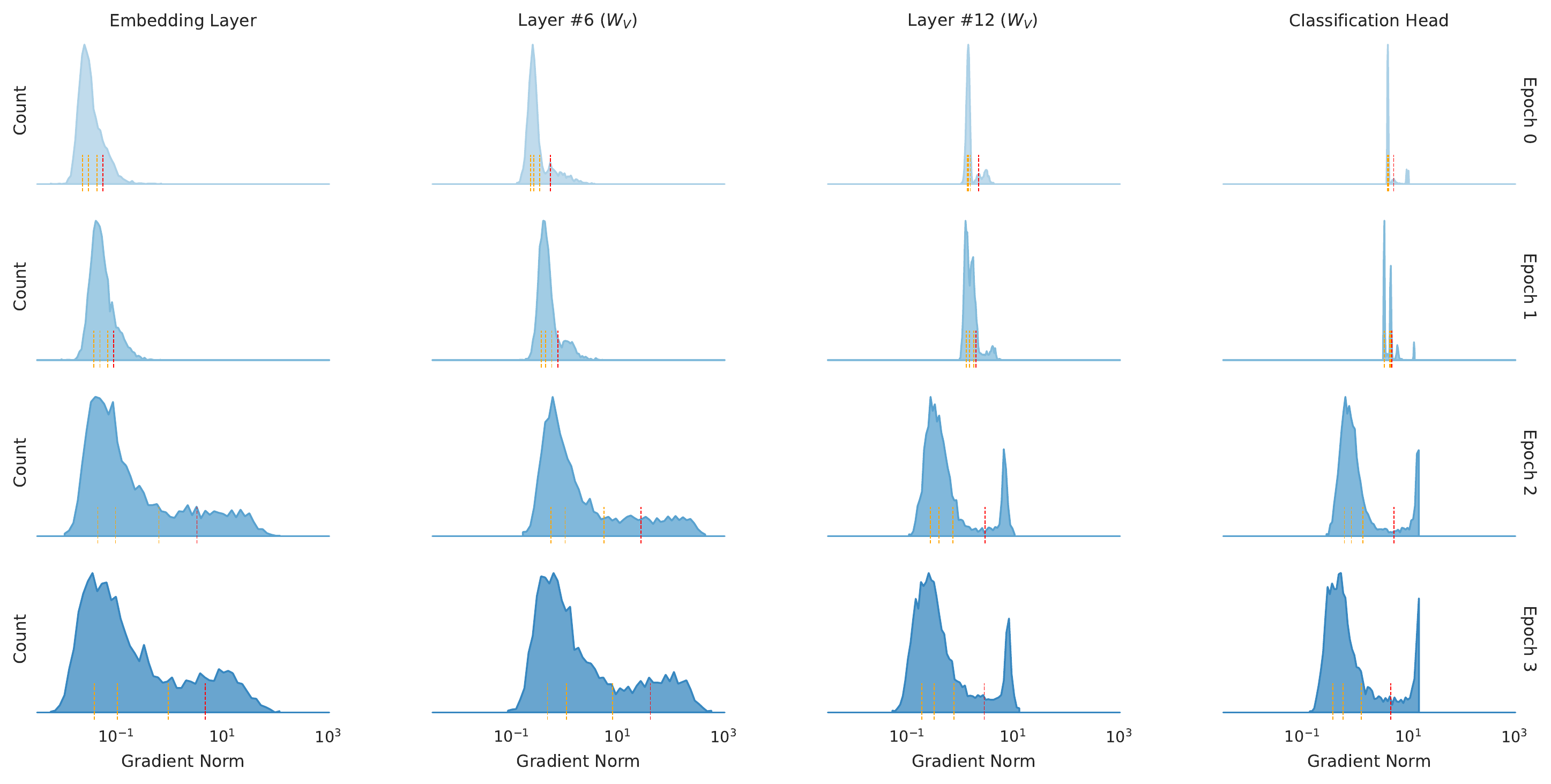}
    \caption{The distribution of gradient norms of 4,096 randomly selected samples shifts across the fine-tuning procedure for the setup of SST-2 with RoBERTa-base. We show the gradient statistics of (1) the embedding layer, (2) the value matrix $W_V$ of the 6th and 12th self-attention module, (3) the final classification layer. Quantiles \{25\%, 50\% (median), 75\%, 85\%\} of the gradient norms are marked with dashed lines, from left to right.}
    \label{fig:gnorm_distribution_sst2}
\end{figure}

\newpage
\section{More on Per-device Clipping and Experiments with GPT-3}\label{app:per_device_clipping}

\subsection{Additional Comments on Per-device Clipping}
Algorithm~\ref{alg:perdevice} details the private pipeline parallel training procedure with per-device clipping covered in the main text.
The algorithm adopts the equal budget noise allocation strategy to avoid incurring extra communication across devices.
For simplicity, the pseudocode covers a single update and omits any subprocedures for adapting clipping thresholds. 
Extending this pseudocode to adaptive threshold clipping based on quantile estimation is straightforward. 
Lastly, the pseudocode assumes that the model has already been partitioned into chunks of consecutive layers, where each chunk $\vtheta_k$ is hosted on device $k$. 

\begin{algorithm}
\caption{Single Update of Private Pipeline Parallel Training With Per-Device Clipping}
\label{alg:perdevice}
\begin{algorithmic}[1]

\STATE \textbf{INPUT}:  
Minibatch $\mathcal{S}$; 
iterate $\vtheta$; 
per-device parameters $\{\vtheta_1, \cdots, \vtheta_K\}$ hosted on $K$ devices;  
clipping thresholds $\{C_1,\dots, C_K\}$; 
noise multiplier $\sigma$; 
learning rate $\eta$; 
number of microbatches per minibatch $J$

\STATE Partition minibatch $\mathcal{S}$ into $J$ microbatches $\{ \mathcal{S}_1, \cdots, \mathcal{S}_J \}$

\STATE Create an empty execution schedule $\mathcal{C}$ 

\FOR{$j = 1$ to $J$}

\STATE Stage microbatch $\mathcal{S}_j$'s $\texttt{LocalForward}$ and $\texttt{LocalBackward}$ calls in the schedule $\mathcal{C}$, ensuring the stages are executed sequentially for this microbatch
\ENDFOR

\STATE Organize the schedule $\mathcal{C}$ based on pipeline parallel rules, allowing different devices to process different microbatches simultaneously

\STATE Execute the schedule $\mathcal{C}$

\STATE Synchronize all devices

\FOR{$k = 1$ to $K$}
\STATE $\vtheta_k' \gets \vtheta_k -{\eta} u_k $.
\ENDFOR

\STATE \textbf{return} $\vtheta' = (\vtheta_1', \cdots, \vtheta_K') $
\end{algorithmic}
\end{algorithm}

\begin{minipage}{0.46\textwidth}
\begin{algorithm}[H]
    \centering
    \caption{ \texttt{LocalForward} }
    \begin{algorithmic}[1]
        \STATE \textbf{INPUT}: Device id $k$; microbatch index $j$ 
        \STATE Wait for activations $a_{k-1}^{(j)}$ from device $k-1$ if $k > 1$; otherwise transfer microbatch $\mathcal{S}_j$ onto device $k$
        \STATE Perform forward pass with $a_{k-1}^{(j)}$ and model piece $\vtheta_i$ (stored on device $k$) to obtain outputs $a_{k}^{(j)}$
        \STATE Store a copy of $a_{k}^{(j)}$ on CPU if not already saved
        \STATE Communicate activations $a_{k}^{(j)}$ to device $k+1$ if $k < K$
    \end{algorithmic}
\end{algorithm}
\end{minipage}
\hfill
\begin{minipage}{0.46\textwidth}
\begin{algorithm}[H]
    \centering
    \caption{ \texttt{LocalBackward} }
    \begin{algorithmic}[1]
        \STATE \textbf{INPUT}: Device id $k$; microbatch index $j$
        \STATE Transfer activations $a_{k-1}^{(j)}$ from CPU to device $k$
        \STATE Wait for output gradients $o_{k}^{(j)}$ if device $k < K$
        \STATE Rematerialize activations by performing extra forward pass
        \STATE Clip and sum per-example gradients $\{ \tilde{\vg}_k^{(i)} \}_i$ of local model piece $\vtheta_k$ by backpropagating based on $o_{k}^{(j)}$ or the loss values with threshold $C_k$; add this to a local accumulator $u_k$ (stored on device $k$)
        \STATE If $j=1$, add noise to accumulator $u_k$ to guarantee DP
        \STATE Compute gradients with respect to input $o_{k-1}^{(j)}$ and communicate this to device $k-1$ if $k > 1$. 
    \end{algorithmic}
\end{algorithm}
\end{minipage}

\subsection{Fine-tuning GPT-3 on SAMSum}
Note the term ``GPT-3'' in the literature is used in multiple occasions and can refer to multiple models. 
Our experiments are based on fine-tuning or prompting the original GPT-3 model~\citep{brown2020language} and not the more recent variants which had been fine-tuned or adapted in some way (e.g., instruct-GPT-3~\citep{ouyang2022training} labeled with prefix \texttt{instruct-} in OpenAI API).

Larger models are known to have better fine-tuned performance when inputs and outputs are formatted as instructions and responses~\citep{wei2021finetuned,sanh2021multitask}. 
We observed similar results when fine-tuning with differential privacy, and thus augmented the training and test sets by prepending the inputs with the instruction ``Summarize the following dialogue'' and the outputs with the delimiter ``TL;DR''. 
To ensure a fair comparison, we used this instruction-augmented dataset for all experiments. 

For decoding from models, we used beam search with a beam size of 4 for both GPT-3 and GPT-2-xl (including in-context learning experiments). 

To ensure we account for the variability in performance with different prompts for in-context learning, we sampled 3 sets of prompts for the 4-shot learning experiments and reported the average metric over runs. 

Without access to the original pretraining corpus, we cannot completely rule out the possibility of data contamination, which refers to the unfortunate outcome that parts of the fine-tuning or evaluation data occur in the pretraining corpus.
Nevertheless, we believe the chances of this happening are small due to two reasons. 
First, zero-shot prompting GPT-3 with both low temperature sampling and beam search based on instruction-augmented inputs tended to result in completions which either repeated or extended the instruction or the dialog (e.g., ``the following is a dialog between...''), or attempted to continue the dialog but digressed. 
In the limited number of examples we inspected, we were unable to find an instance where the output looked similar to a high-quality summary. Second, we looked up the initial time when the SAMSum paper was released to arXiv (late Nov. 2019). 
Given that the GPT-3 model we based our experiments off were pretrained with shards of Common Crawl uploaded (possibly) at the end of 2019~\citep{brown2020language}, we performed simple searches of the SAMSum paper with their url index in the Dec. 2019 crawl archive of Common Crawl and were not able to find the link of the paper.
Notably the SAMSum dataset was crafted by linguists and highly curated (as opposed to collected based on web data). 

For fine-tuning GPT-3 with DP LoRA on SAMSum, we reused hyperparameters adopted by~\cite{hu2021lora}, but re-tuned the learning rate based on preliminary runs for another dataset.
We set all per-device clipping threshold to be 1e-5 and adopted the equal budget noise allocation strategy for simplicity. 
We fine-tuned for 5 epochs in all runs (both GPT-3 and GPT-2-xl; both private and non-private).

For the DP LoRA fine-tuning runs, we used a machine with 16 V100 GPUs each with 32 gigabytes of VRAM. 
This enabled LoRA fine-tuning with a rank of 32 with a microbatch size of 1 under pipeline parallelism.
Fine-tuning with DP LoRA for 5 epochs on SAMSum's training set took 15 hours, and decoding with test inputs using beam search further took another 22 hours.

\section{Proofs} \label{app:proofs}
We present the proof for Proposition~\ref{prop:noise-multiplier}. For easy reference, we restate the proposition here.
\NewNoiseMultiplier*
\begin{proof}
The proof is based on direct calculation, a simple version of which is given in \cite{andrew2019differentially}. First, we note that the clip counts $b_k^{(i)}$ is either 0 or 1 (see line 10 in Algorithm \ref{alg:perlayer-DPSGD}). One can make it to be symmetric by using $b_k^{(i)} -\frac{1}{2}$, whose sensitivity is $\frac{1}{2}$. Suppose the gradient has sensitivity $S$. 

For the Gaussian mechanism, to keep the privacy budget constant we have 
\begin{flalign*}
S^2/(S\sigma)^2 = S^2/(S\sigma_{\text{new}})^2 + K\cdot \frac{(1/2)^2}{\sigma_b^2}.
\end{flalign*}
Simplifying the above expression, we get $\sigma_{\text{new}}$.

We can further compute the fraction $r$ of budget that is used to privately estimate quantiles by  
$r=K\cdot \frac{(1/2)^2}{\sigma_b^2}/(1/\sigma^2)$. We can also derive the value of $\sigma_b$ given $r$ from the above formula.
\end{proof}
\section{Noise Allocation Comparison} \label{app:noise-allocation}

We compare the noise allocation strategies empirically. Apart from the \emph{global strategy} where $\gamma_k =1 $ for all $k\in [K]$ and the \emph{equal budget strategy} where $\gamma_k=C_k$ for all $k\in [K]$ that are discussed in Section~\ref{sec:adaptive-clipping}, we also consider another \emph{weighted strategy}: $\gamma_k = C_k/\sqrt{d_k}$ for $k\ \in [K]$. In this case,  the number of parameter plays a role so that each coordinate would roughly have the same signal to noise ratio and the total noise has squared $\ell_2$ norm $V_{E}\propto (\sum_{k=1}^{K}d_{k})\cdot (\sum_{k=1}^{K}C_k^{2})$.

We fine-tune RoBERTa-base models on the SST-2 sentence classification task. The hyper-parameters are searched for each strategy separately where the ranges follow Appendix~\ref{app:hyperparameter}. Results are presented in Table~\ref{table:ablation_3_noise_alloc}. We can see that three strategies achieves comparable performance and the global strategy is slightly better. Therefore, we use global strategy for all the experiment except for GPT-3 where  the equal budget strategy is used to eliminate the concern of communication across devices.
 \begin{table}[thb]

\newcommand{\bb}[1]{\textbf{#1}}

\footnotesize
\setlength\tabcolsep{2.4pt}
\caption{Accuracy (in \%) on SST-2 dev set with different noise allocation strategies. }
\centering
\begin{tabular}{l c ccc ccc}
\toprule
\multirow{2}[2]{*}{Model} &
\multirow{2}[2]{*}{Strategy} & 
\multicolumn{2}{c}{\text{$\epsilon=3$}} & 
\multicolumn{2}{c}{\text{$\epsilon=8$}} \\
\cmidrule(lr){3-4} \cmidrule(lr){5-6} 
& & Training & Validation  &  Training & Validation \\
\midrule
\multirow{3}{*}{RoBERTa-base} & Global & \bb{89.2} & \bb{92.0} & \bb{89.7} & \bb{92.3} \\
& Equal Budget & 88.3 & 91.4 & 89.0 & 91.7 \\
& Weighted (equal SNR) & 89.0 & 91.7 & 89.6 & 91.9 \\
\bottomrule
\end{tabular}
\label{table:ablation_3_noise_alloc}
\end{table}

\section{Ablation Studies} \label{app:ablation}

Here we conduct ablation studies to see 1) the influence of  using different quantiles to perform clipping; 2) the influence of varying the privacy budget for quantile estimation; 3) whether adaptive flat clipping significantly better than fixed flat clipping.

\paragraph{Clipping with Different Target Quantiles.}
We use different target quantiles for clipping on both WRN16-4 and RoBERTa-base. We choose the quantile for CIFAR-10 from $\{0.3,0.4,0.5,0.6,0.7,0.8,0.9\}$ and that for SST-2 from $\{0.05, 0.2, 0.4, 0.6, 0.85, 0.9, 0.95\}$. Other hyperparameters  are the same as those in Section~\ref{subsec:cifar10} and~\ref{subsec:roberta}. We plot the results in Figure \ref{fig:ablation-1-target-quantile}. On CIFAR-10, the accuracy is robust to the choice of target quantile on all values considered. On SST-2, all quantiles around $0.9$ give good performance. This suggests that setting the target quantile according to the model accuracy is a good default choice for the classification tasks. For generation task, we tune the target quantile as a hyper-parameter in general.

\begin{figure}
\centering
\begin{subfigure}[!t]{0.5\textwidth}
    \includegraphics[width=\textwidth]{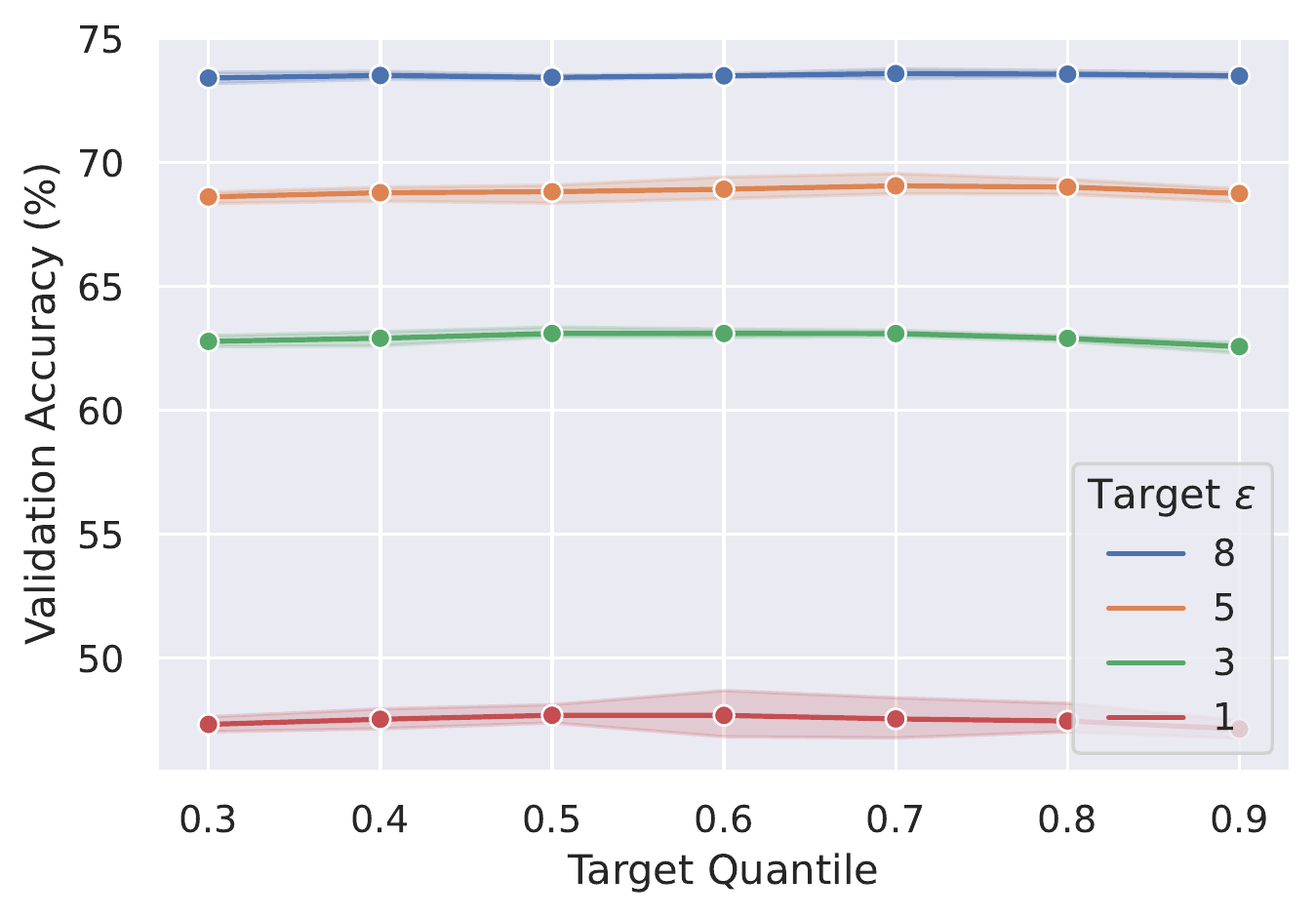}
    \caption{CIFAR-10}
    \label{fig:ablation_1_cifar}
\end{subfigure}\hfill
\begin{subfigure}[!t]{0.5\textwidth}
    \includegraphics[width=\textwidth]{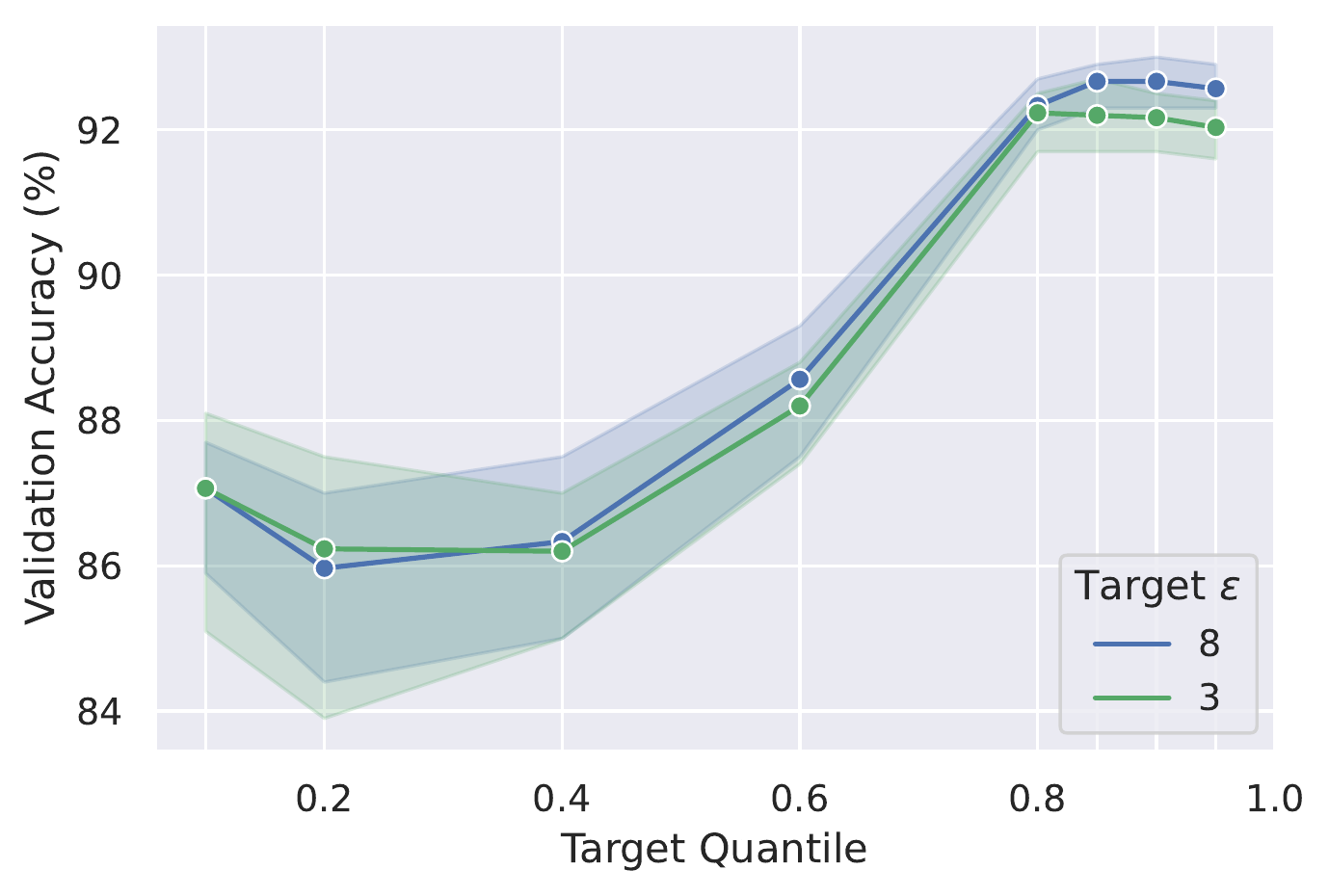}
    \caption{SST-2}
    \label{fig:ablation_1_sst2}
\end{subfigure}\hfill

\caption{Validation accuracy (in \%) with different target quantiles.
}
  \label{fig:ablation-1-target-quantile}
\end{figure}

\paragraph{Different Budgets for Quantile Estimation.}
We show the influence of using different privacy budgets to estimate the target quantile. We fine-tune RoBERTa-base models on SST-2. The fraction of privacy budget for quantile estimation $r$ is from $\{0.01\%, 0.1\%, 1\%, 5\%, 10\%, 20\%, 40\%, 80\%\}$. We plots the results in Figure \ref{fig:ablation-2-quantile-budget}. The performance is good for a wide range  of $r$. When $\epsilon=8$, using $r$ as small as $0.01\%$ still gives good accuracy. This further confirms the finding in \cite{andrew2019differentially} that quantiles can be estimated quite accurately with small privacy budget. Therefore we only need to split negligible budget for the private quantile estimation without affecting much the noises added to the model updates.

\begin{figure}[!t]
\centering
  \includegraphics[width=0.5\linewidth]{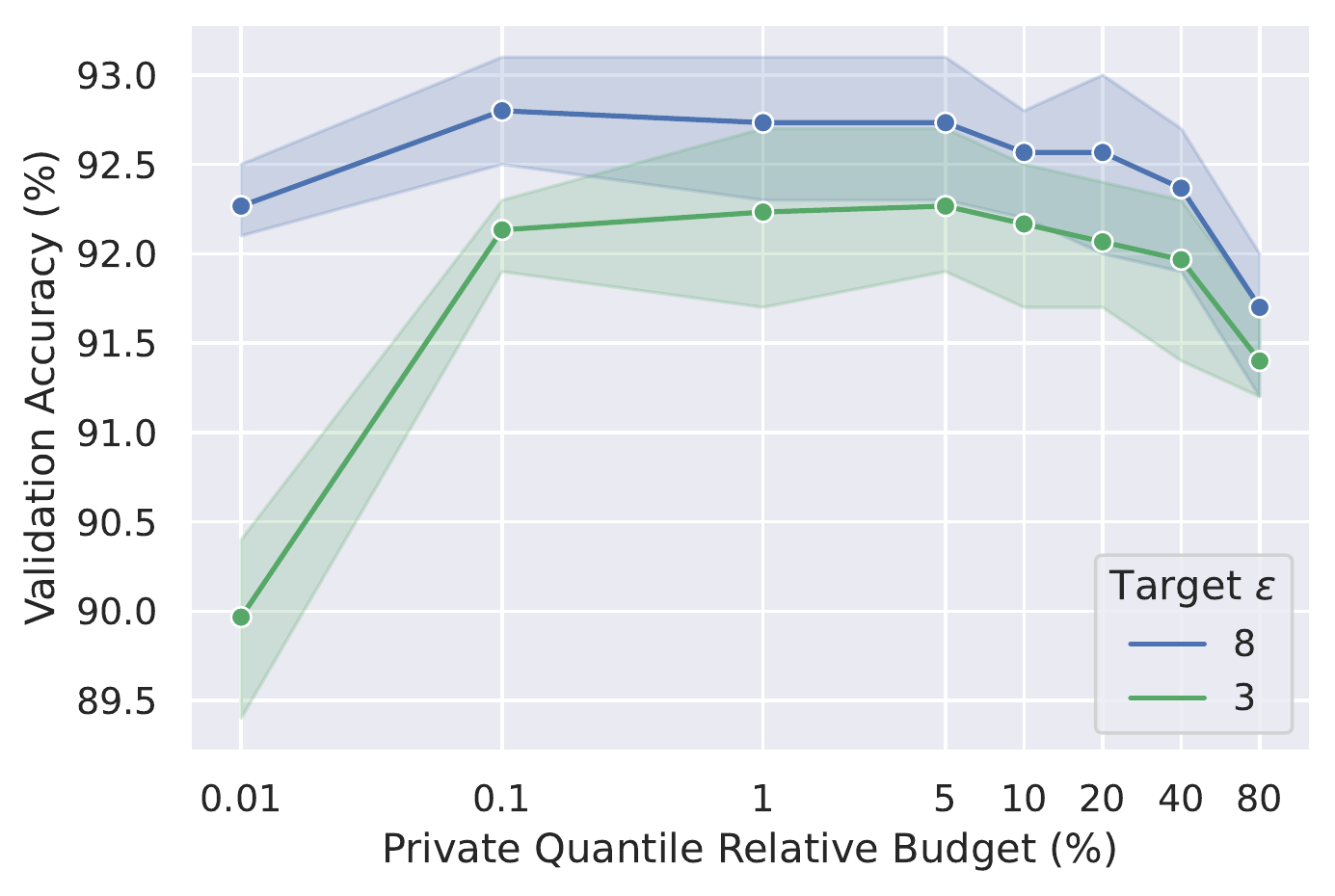}
  \caption{Validation accuracy (in \%) on SST-2 of using different budgets for quantile estimation.}
  \label{fig:ablation-2-quantile-budget}
\end{figure}
 
\paragraph{Adaptive Per-layer Clipping vs Adaptive Flat Clipping.} 
We have verified that adaptive per-layer clipping can match the performance of well-tuned flat clipping in Section~\ref{sec:experiment}. To really justify value of adaptive per-layer clipping, we need to demonstrate that adaptive flat clipping does not achieve significantly better performance than fixed flat clipping. We run experiments on the CIFAR-10 task with WRN16-4 and the SST-2 task with RoBERTa-base model. Their results are presented in Table~\ref{table:ablation_4_fixed_perlayer_cifar10} and Table~\ref{table:ablation_4_fixed_perlayer_sst2}. We can see that adaptivity also helps flat clipping but the improvement is not statistically significant. The performance of adaptive per-layer clipping is on par with that of adaptive flat clipping as well.
\begin{table}[h]
\footnotesize
\setlength\tabcolsep{2.4pt}
\caption{Adaptivity helps flat clipping but not as much as for per-layer clipping. Averaged accuracy and standard deviation are given by 3 independent runs.}

\newcommand{\bb}[1]{\textbf{#1}}

\begin{subtable}[h]{0.45\textwidth}
\centering
\caption{CIFAR-10}
\begin{tabular}{l ccc ccc}
\toprule
{Method} & 
\text{$\epsilon=3$} & 
\text{$\epsilon=8$} \\
\midrule
\textbf{Flat clipping} \\
\hspace{4mm} fixed & $63.1_{(0.22)}$ & $73.9_{(0.87)}$ \\ 
\hspace{4mm} adaptive & - & - \\[1.5mm] %
 & \bb{+$0$} & \bb{+$0$} \\ 
\midrule
\textbf{Per-layer clipping} \\
\hspace{4mm} fixed & $60.6_{(0.79)}$ & $67.8_{(1.20)}$ \\ %
\hspace{4mm} adaptive & $63.7_{(0.34)}$ & $73.5_{(0.87)}$ \\[1.5mm] %
  & \bb{+$3.1$} & \bb{+$5.7$} \\
\bottomrule
\end{tabular}
\label{table:ablation_4_fixed_perlayer_cifar10}
\end{subtable}
\hfill
\begin{subtable}[h]{0.45\textwidth}
\centering
\caption{SST-2}
\begin{tabular}{l ccc ccc}
\toprule
{Method} & 
\text{$\epsilon=3$} & 
\text{$\epsilon=8$} \\
\midrule
\textbf{Flat clipping} \\
\hspace{4mm} fixed & $91.5_{(1.07)}$ & $92.0_{(0.67)}$ \\ %
\hspace{4mm} adaptive & $92.2_{(0.70)}$ & $92.6_{(0.46)}$ \\[1.5mm] %
  & \bb{+$0.7$} & \bb{+$0.6$} \\ 
\midrule
\textbf{Per-layer clipping} \\
\hspace{4mm} fixed & $89.4_{(1.04)}$ & $89.7_{(0.70)}$ \\ 
\hspace{4mm} adaptive & $92.0_{(0.23)}$ & $92.4_{(0.44)}$ \\[1.5mm] %
  & \bb{+$2.6$} & \bb{+$2.7$} \\ 
\bottomrule
\end{tabular}
\label{table:ablation_4_fixed_perlayer_sst2}
\end{subtable}

\label{table:ablation_4_fixed_perlayer}
\end{table}

\clearpage
\newpage
\section{Additional Experiments on Run Time}
\label{app:runtime}

We include additional experiments comparing the run time of different clipping approaches.
Our goal is to further consolidate the claims that 1) adaptive per-layer clipping attains similar or better task performances under the same epoch budget for certain workflows, and that 2) this translates into compute time savings since per-layer clipping is faster than alternative clipping approaches per update (or almost equivalently per epoch).
All experiments here are performed on a machine with a single Titan RTX GPU with 24 GB of VRAM (different from the configuration in Figure~\ref{fig:throughput-comparison} which uses a single A6000 GPU).

The direct experiment we perform is to full fine-tune GPT-2 on E2E with three clipping approaches (adaptive per-layer, ghost, and flat clipping) under the same epoch constraint (which we fix to be 10 for all workflows).
Regarding hyperparameters for flat clipping, we adopt the set of values obtained from extensive tuning on this task used by~\cite{li2022large}.
We reuse the same set of hyperparameters values for ghost clipping, since the approach essentially results in the same gradient updates as flat clipping up to numerical precision (only computed in a different way).
Using these near optimal hyperparameters for flat and ghost clipping prevents our experiments from unfairly disfavouring the two approaches.
Figure~\ref{fig:gpt2runtime-overall-nll} shows that adaptive per-layer clipping consistently achieves lower test set negative log-likelihood than flat clipping and ghost clipping under any given wall time elapse.
While language generation metrics (e.g., BLEU and ROUGE-L) are generally noiser than the test set NLL, Figure~\ref{fig:gpt2runtime-overall-task-metric} shows that adaptive per-layer clipping generally yields better task metric numbers compared to flat clipping and ghost clipping under the same wall time.

Finally, we note the caveat that the precise run time advantage of adaptive per-layer clipping against flat clipping may vary across machines and GPU types.
In addition, the realized gains for actual training workflows might be smaller than that observed in our controlled experiments (e.g., Figure~\ref{fig:throughput-comparison}) due to compute time spent on auxiliary operations such as data loading and data preprocessing (e.g., pad sequences of different length to the same length). 
For instance, we repeat the controlled experiment in Figure~\ref{fig:throughput-comparison} but this time with a different GPU, and observe slightly different factors of speed gains. 
Overall, we generally see that adaptive per-layer clipping is above 1.4x the speed of flat clipping in our controlled experiments, and the realized gains is roughly as much for full fine-tuning on E2E with our implementation.

\begin{figure}[htb]
\begin{center}
\begin{minipage}[t]{0.5\linewidth}
\centering
{\includegraphics[width=0.95\linewidth]{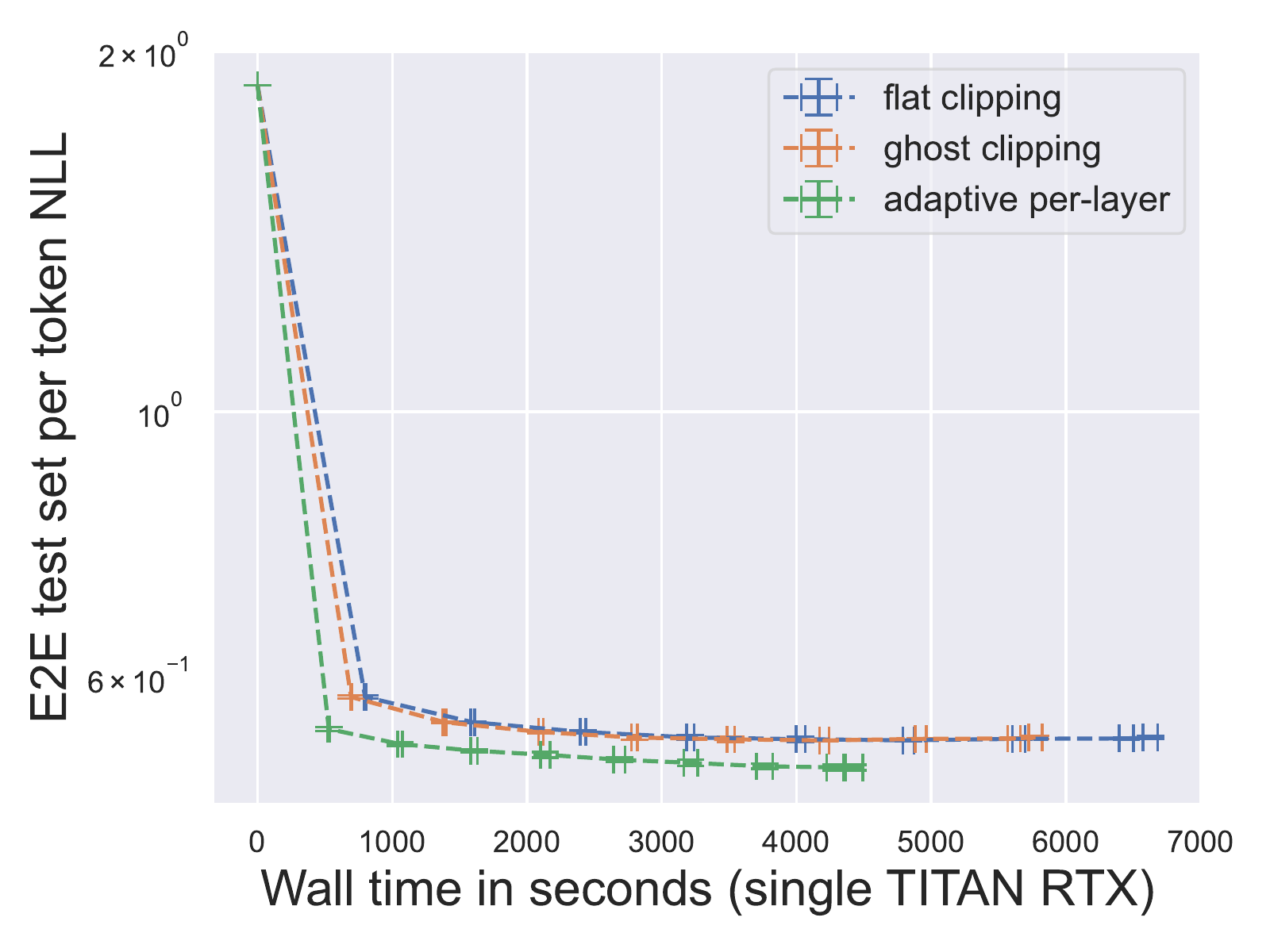}} \end{minipage}
\end{center}
\caption{
Adaptive per-layer clipping consistently achieves lower test set negative log-likelihood than flat clipping and ghost clipping under the same wall time.
}
\label{fig:gpt2runtime-overall-nll}
\end{figure}

\begin{figure}[htb]
\begin{center}
\begin{minipage}[t]{0.45\linewidth}
\centering
{\includegraphics[width=0.95\textwidth]{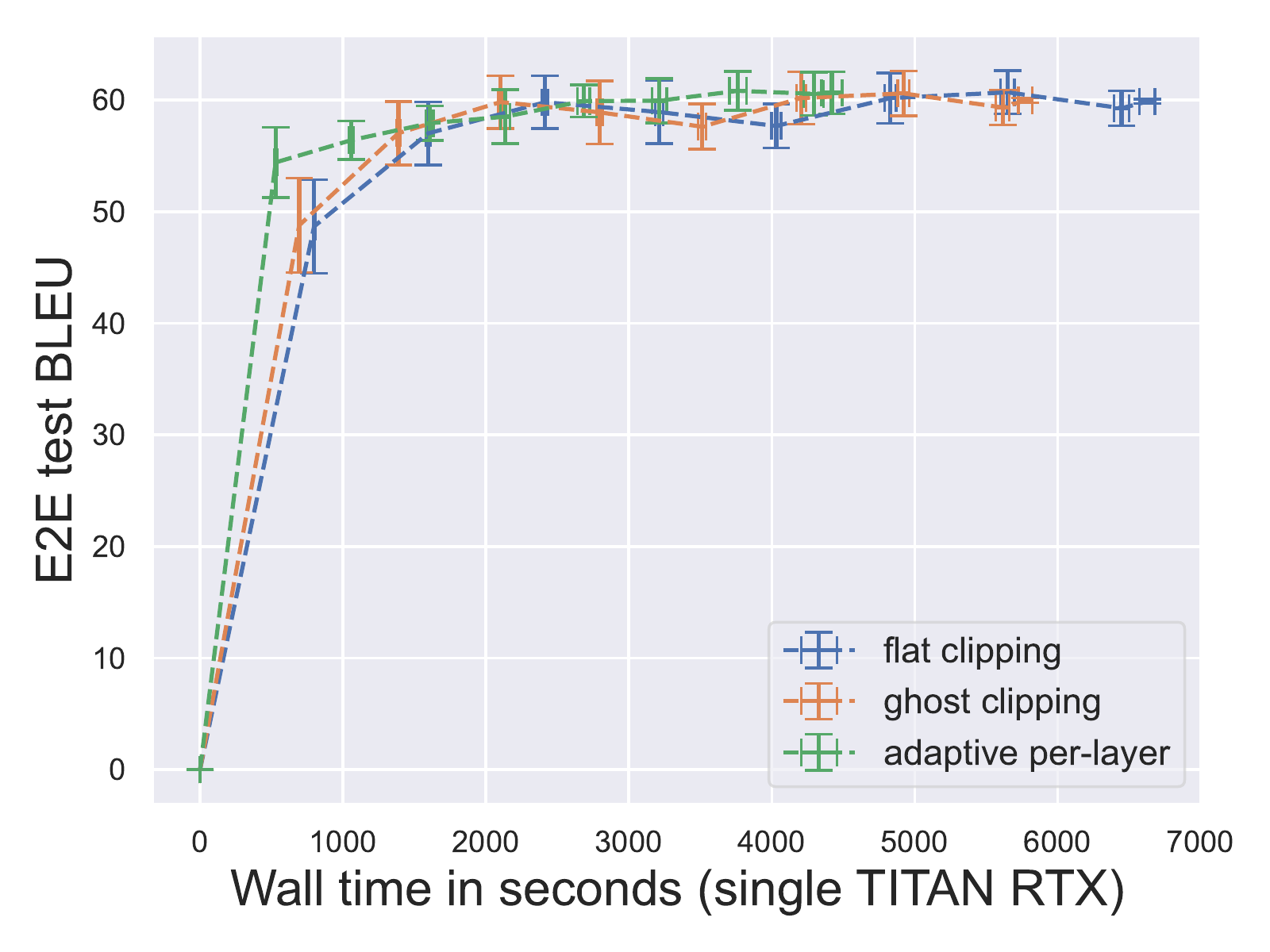}} \\
(a) BLEU
\end{minipage}
\begin{minipage}[t]{0.45\linewidth}
\centering
{\includegraphics[width=0.95\textwidth]{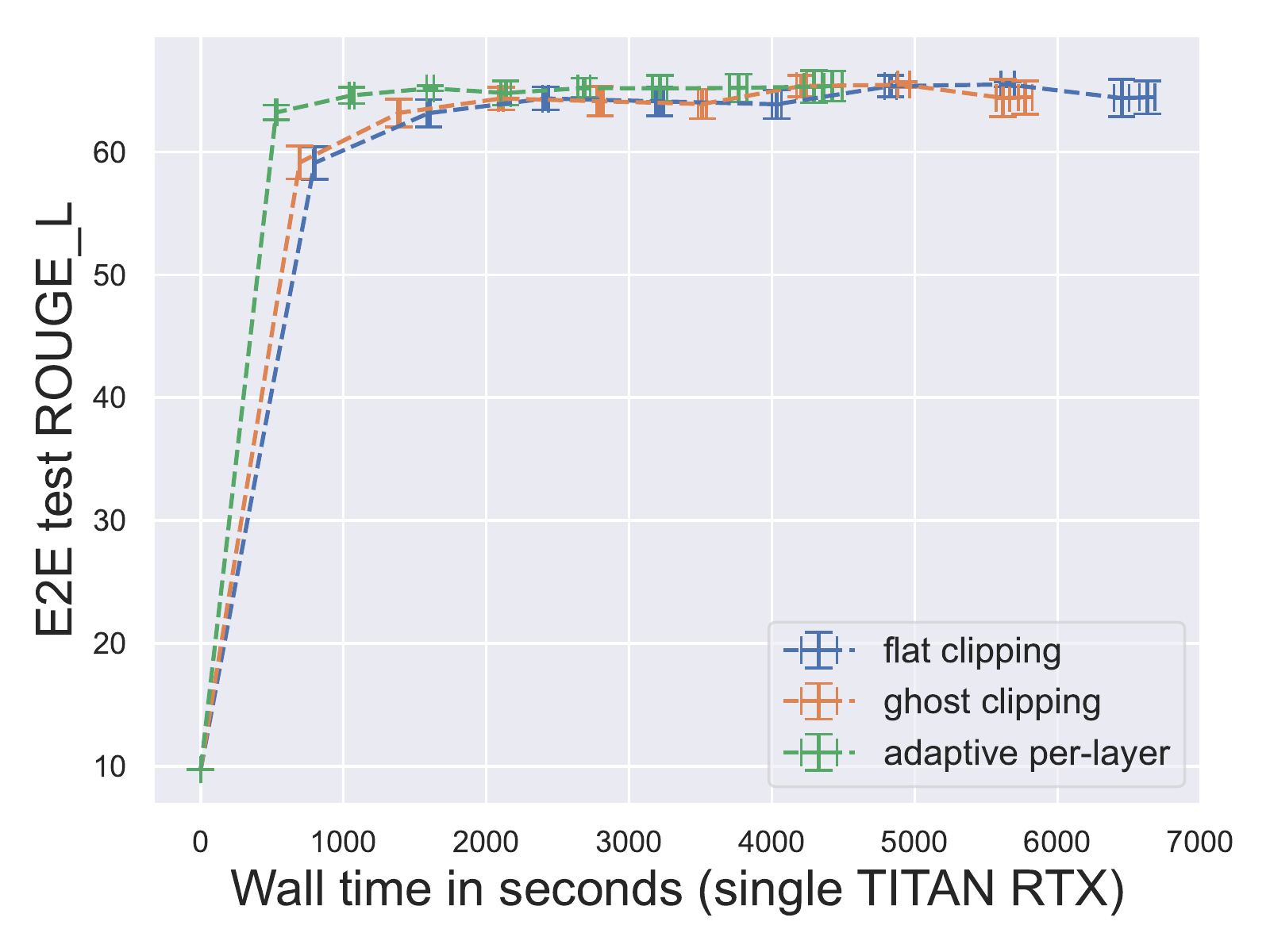}} \\
(b) ROUGE-L
\end{minipage}
\end{center}
\caption{
Adaptive per-layer clipping generally yields better task metric numbers compared to flat clipping and ghost clipping under the same wall time.
}
\label{fig:gpt2runtime-overall-task-metric}
\end{figure}

\begin{figure}[htb]
\begin{center}
\begin{minipage}[t]{0.45\linewidth}
\centering
{\includegraphics[width=0.95\linewidth]{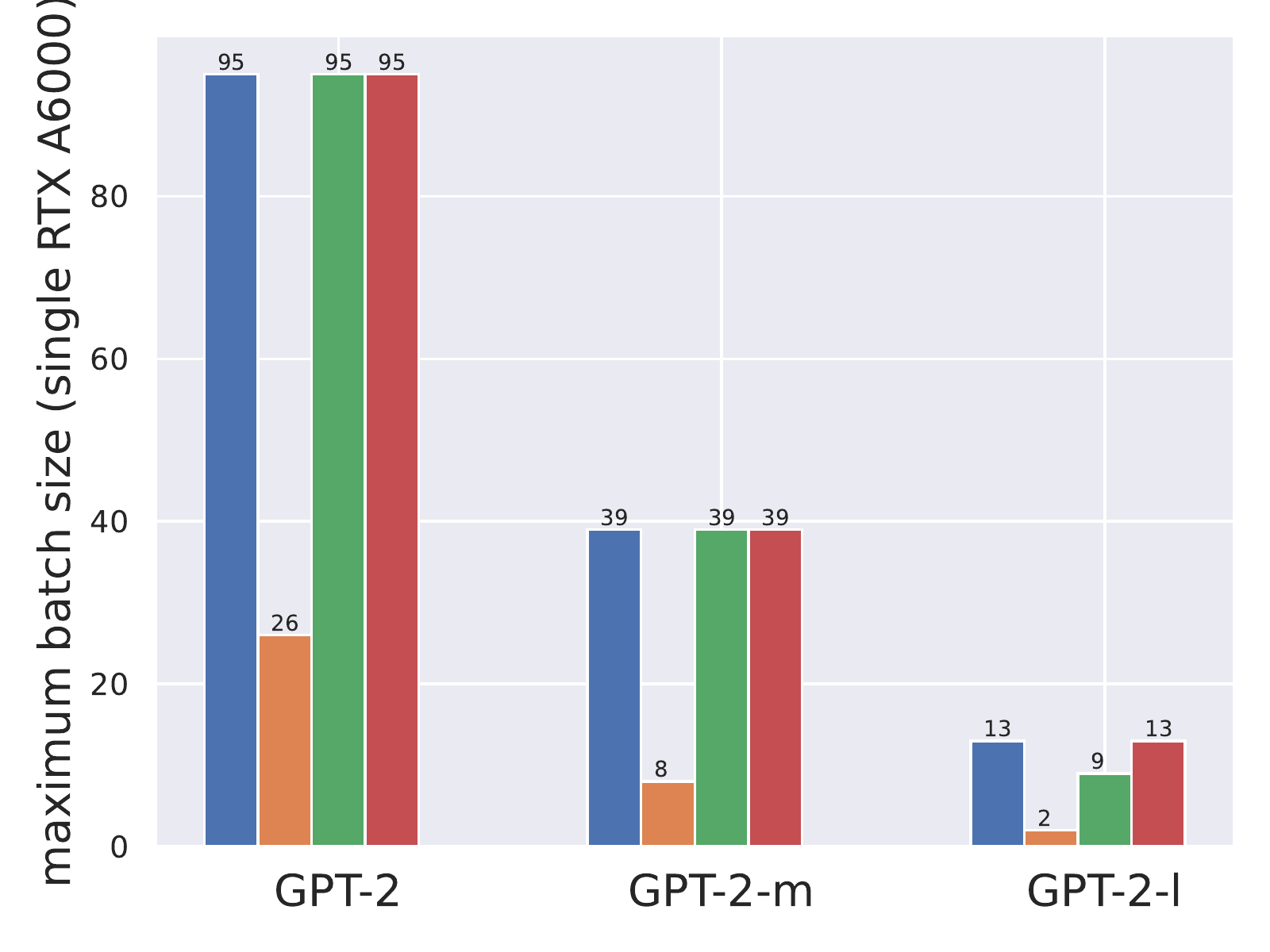}} \\
(a) Memory
\end{minipage}
\begin{minipage}[t]{0.45\linewidth}
\centering
{\includegraphics[width=0.95\textwidth]{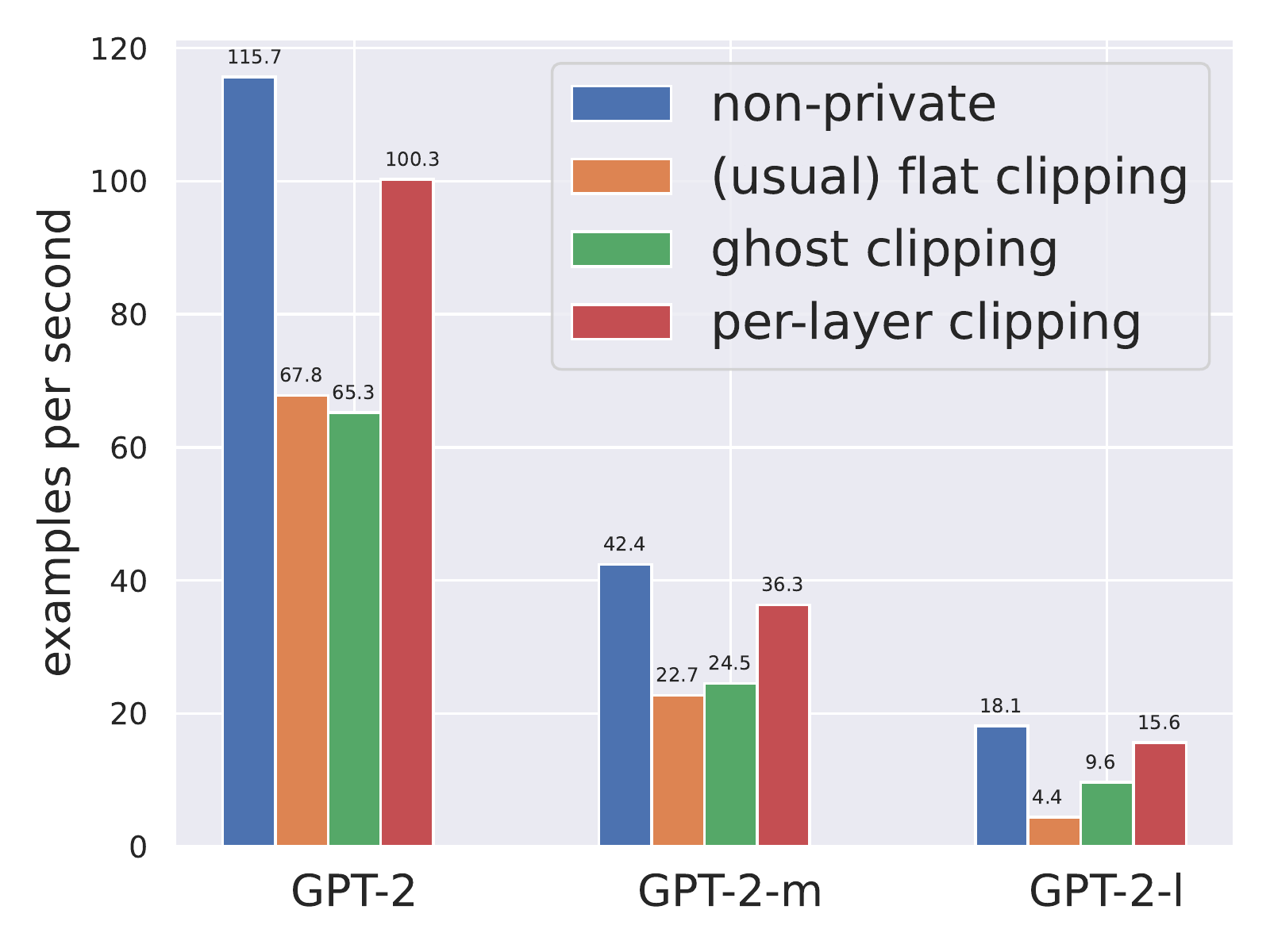}} \\
(b) Throughput
\end{minipage}
\end{center}
\caption{
Private learning with (adaptive) per-layer clipping can be almost as efficient as non-private learning (the throughput gap is less than 15\% in this case).
}
\label{fig:gpt2runtime-breakdown}
\end{figure}

\clearpage
\newpage
\section{Additional Experiments Comparing Adaptive Per-layer Against Flat Clipping}
\label{app:glue_controlled_complementary}

This section complements the results in Table~\ref{table:glue_epoch_sst_2} and follows the same experimental protocol for those experiments except now we consider $\epsilon=8$. 
The goal is to provide further evidence showing that adaptive per-layer clipping has a privacy-utility tradeoff that is comparable to flat clipping when both methods are used to train models for the same number of training epochs.

Table~\ref{table:glue_epoch_sst_2_complementary} shows that for full fine-tuning and across the epoch constraints we considered, adaptive per-layer clipping is competitive with flat clipping in terms of accuracy for different privacy budgets.
Note the results we obtain for flat clipping in Table~\ref{table:glue_epoch_sst_2_complementary} are higher than those reported by~\cite{li2022large}.
This is because to ensure that we are not unfairly disfavouring flat clipping for this  SST-2 task, we tuned hyperparameters on this task (details in Appendix~\ref{app:glue}); recall~\cite{li2022does} didn't tune on SST-2, but instead relied on hyperparameter transfer from the E2E task. 

\begin{table}[thb]
\newcommand{\bb}[1]{\textbf{#1}}
\setlength\tabcolsep{2.4pt}
\caption{
Adaptive per-layer clipping matches or outperforms flat clipping in accuracy (\%) on SST-2 under fixed epoch ($E$) constraints. 
This experiment performs full fine-tuning with both clipping approaches.
Numbers in parentheses are standard deviations from three independent runs. 
}
\centering
\begin{tabular}{l c cccc cccc}
\toprule
\multirow{2}[2]{*}{Model} &
\multirow{2}[2]{*}{Method} & 
\multicolumn{4}{c}{\text{$\epsilon=8$}} \\
\cmidrule(lr){3-6} 
& & $E=3$ & $E=10$ & $E=20$ & $E=30$ \\
\midrule
\multirow{2}{*}{RoBERTa-base} %
& Flat clipping (tuned) & $89.13 (0.64)$ & $91.53 (0.12)$ & $91.97 (0.67)$ & $92.10 (0.56)$  \\
& Adaptive per-layer & $90.83 (1.39)$ & $92.27 (0.76)$ & $92.63 (0.32)$ & $92.87 (0.12)$ \\
\midrule
\multirow{2}{*}{RoBERTa-large} & Flat clipping (tuned) & $92.43 (0.32)$ & $93.50 (0.20)$ & $94.57 (0.55)$ & $94.87 (0.76)$ \\
& Adaptive per-layer &  $93.20 (0.36)$ & $93.53 (0.47)$ & $94.37 (0.15)$ & $94.33 (0.38)$ \\
\bottomrule
\end{tabular}
\label{table:glue_epoch_sst_2_complementary}
\end{table}

\clearpage
\newpage
\section{Additional Related Work}
\label{app:more_related}

\paragraph{Faster DP-SGD.} Improving the efficiency of DP-SGD is an active research area. One line of works improve the implementation  without changing the algorithm such as using better parallelism and compile-time optimization \citep{subramani2021enabling,anil2021large}. 
\cite{subramani2021enabling} show that the running time of carefully implemented DP-SGD is comparable  to non-private SGD for small-size models. 
However, the high memory cost of storing per-example gradients still limits the throughput of DP-SGD when the model size is large \citep{li2022large}. 
Another line of works avoids instantiating per-example gradients by running backpropagation twice \citep{goodfellow2015efficient,lee2021scaling,bu2021fast,li2022large,bu2022scalable}. The high-level idea is to compute or estimate per-example gradient norms in the first backpropagation and reweight loss functions before the second backpropagation.  Although these works achieve memory efficiency, they add computational overhead because of the additional backpropagation.

\paragraph{DP-SGD with Per-layer Clipping.} Per-layer clipping (or more generally group-wise clipping)  has been studied in~\cite{abadi2016deep,mcmahan2018general, mcmahan2018learning, dupuy2022efficient}. However, the advantage of per-layer clipping has not yet been fully understood because of two reasons. Firstly, previous work does not focus on computational efficiency, leaving the empirical advantage of group-wise clipping unexplored. Secondly, these works simply adopt fixed thresholds for per-layer clipping and hence generally observe performance drops compared to flat clipping. In this work, we use adaptive thresholds to improve the privacy-utility tradeoff of per-layer clipping. Moreover, we give an efficient implementation of per-layer clipping to demonstrate its superior empirical advantage. 

\paragraph{Privacy Attacks Against Deep Models.} Deep models may unintentionally leak sensitivie information about their training data \citep{shokri2017membership,hitaj2017deep,zhu2019deep,song2019privacy,carlini2020extracting,choquette2021label,carlini2022membership,balle2022reconstructing}. For instance, \cite{carlini2020extracting} show that GPT-2 outputs its training data when short prefixs are provided. Training deep models with differential privacy has become a popular choice  to prevent data leakage \citep{abadi2016deep,papernot2016semi,mcmahan2018learning,zhu2020private}. In addition to theoretical guarantee, differentially private models are also very robust to empirical privacy attacks \citep{bernau2019assessing,carlini2019secret,yu2021large}.

\paragraph{Adapting to the Geometry of Gradients in DP-SGD.} Using adaptive clipping thresholds in DP-SGD fits more broadly into a line of work that adapts the geometry of gradients to clipping and noising. The gradients of machine learning models usually have much smaller intrinsic dimensions than the model sizes. This property has been used to prove better theoretical bounds for DP-SGD or improve its empirical performance~\citep{kairouz2020fast,song2020characterizing,zhou2021bypassing,yu2021do,li2022does,ma2022dimension}.

\end{document}